\documentclass{article} 
\usepackage{iclr2027_conference,times}

\usepackage{amsmath,amsfonts,bm}

\def\eqref#1{equation~\ref{#1}}

\def\1{\bm{1}}

\DeclareMathAlphabet{\mathsfit}{\encodingdefault}{\sfdefault}{m}{sl}
\SetMathAlphabet{\mathsfit}{bold}{\encodingdefault}{\sfdefault}{bx}{n}

\DeclareMathOperator*{\argmax}{arg\,max}

\usepackage{hyperref}
\usepackage{url}
\usepackage{caption}

\usepackage{amsmath,amssymb,amsthm}
\usepackage{algorithm}
\usepackage{algpseudocode}
\usepackage{booktabs, tabularx, ragged2e, multirow, amsmath}
\usepackage{booktabs}
\usepackage{longtable}

\usepackage{booktabs}
\usepackage{multirow}
\usepackage{makecell}

\usepackage{graphicx}

\usepackage{wrapfig}
\usepackage{graphicx}

\usepackage{xcolor}
\usepackage{hyperref}

\definecolor{navyblue}{RGB}{31, 119, 180}

\hypersetup{
    colorlinks=true,       
    citecolor=navyblue,    
    linkcolor=navyblue,    
    urlcolor=navyblue      
}

\usepackage{array}
\newcolumntype{Y}{>{\centering\arraybackslash}X}

\usepackage[normalem]{ulem}
\newlength{\fmcol}
 
\providecommand{\argmax}{\operatorname*{arg\,max}}

\newcommand{\ms}[2]{$#1_{\,#2}$}                        
\newcommand{\msb}[2]{$\underline{\mathbf{#1}}_{\,#2}$}  

\newtheorem{lemma}{Lemma}

\title{Label-Free Coreset Selection with Foundation Models for Efficient Annotation in Computational Pathology}

\author{%
Tuo Yin$^{1,2}$,\;
Jennifer Dhont$^{1,2,3}$ \\[4pt]
$^{1}$Department of Medical Physics, Institut Jules Bordet, H\^{o}pital Universitaire de Bruxelles (HUB), Belgium \\
$^{2}$Radiophysics and MRI Physics Laboratory, Universit\'{e} Libre de Bruxelles (ULB), Belgium \\
$^{3}$Laboratory of Image Synthesis and Analysis (LISA), Universit\'{e} Libre de Bruxelles (ULB), Belgium \\[4pt]
\texttt{\{tuo.yin, jennifer.dhont\}@hubruxelles.be}
}

\iclrfinalcopy 
\begin{document}

\maketitle
\thispagestyle{plain}
\begin{abstract}
Computational pathology has the potential to improve clinical outcomes through a demonstrated increase in diagnostic and prognostic accuracy. However, the development and validation of deep learning algorithms still require annotated data, a costly procedure involving expert pathologists who already face critical workforce shortages. Existing coreset selection methods to optimize annotation efforts currently all rely on hyperparameters tuned on natural-image benchmarks that do not transfer to histopathology and are cumbersome to use in clinical practice. In this study, we present GCcore, a novel label-free coreset selection method that embeds every image of a dataset with any pathology foundation model and greedily selects the samples that collectively maximize the global coverage of the embedding space. The proposed method provides a lower-bound guarantee on the global coverage of the returned coreset for any coreset size, while being completely hyperparameter-free and deterministic. We demonstrate GCcore's superior performance over 14 baselines including state-of-the-art methods across 10 tasks and datasets spanning whole slide image classification, tile classification, and tissue segmentation, where it ranks first on six and within the top three on nine, while also demonstrating how existing methods can shift by up to five rank positions depending on their hyperparameter settings. \href{https://github.com/OncoAI-ULBHUB/GCcore}{Code} is publicly available.
\end{abstract}

\section{Introduction}
\label{sec:intro}

\newcommand{\rankfigfrac}{0.45}
\setlength{\intextsep}{0pt}        
\setlength{\belowcaptionskip}{0pt} 
\begin{wrapfigure}{b}{\rankfigfrac\linewidth}
  \centering
  \includegraphics[width=\linewidth]{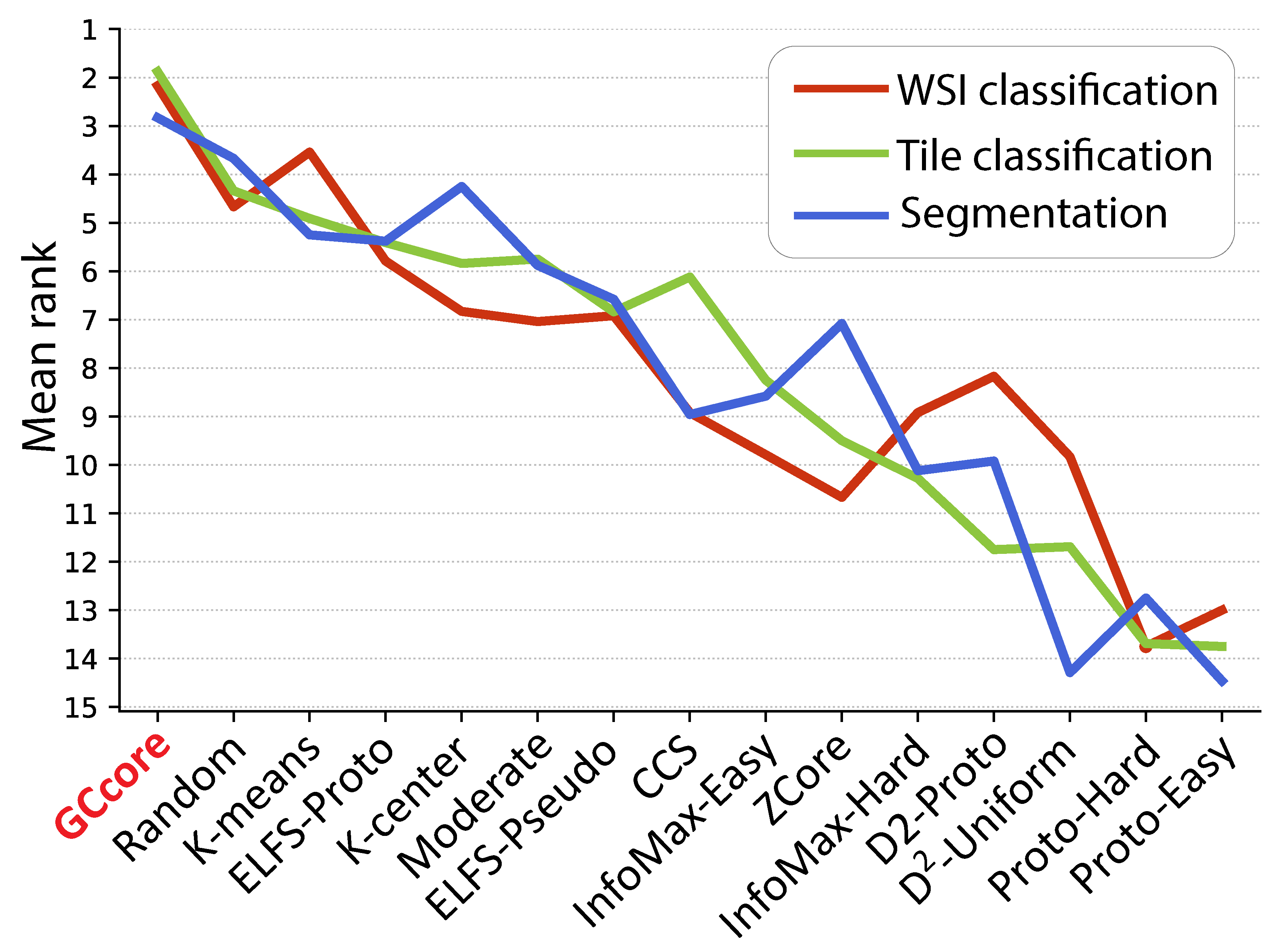}
  \caption{Mean rank of GCcore and competing label-free selection methods across 10 datasets spanning three common computational pathology tasks, in the low-retention regime (tens to hundreds of samples in the coreset that require annotation). Lower rank indicates better downstream performance.}
  \label{fig:three_task_mean_rank}
\end{wrapfigure}

Histopathology plays a central role in clinical practice, underpinning a wide range of diagnostic, , prognostic, and treatment selection decisions, particularly in oncology. At the same time, recent advances in artificial intelligence (AI) have demonstrated considerable potential to enhance and extend histopathological analysis. Digitization of whole-slide images (WSIs) and the increasing use of computational pathology has enabled faster and more standardized (semi)automated tissue evaluation~\citep{verghese2023computational}. Deep learning (DL)-based methods, in particular, have achieved performance comparable to that of expert pathologists across a number of tasks, while also broadening the scope to tasks that were previously too time-consuming, such as detailed WSI tissue segmentation. More recently, large-scale pathology foundation models (FMs) pretrained on millions of image tiles have demonstrated to be robust feature extractors~\citep{campanella2025clinical,bareja2025evaluating}, enabling a wide range of downstream tasks while requiring smaller amounts of annotated data~\citep{roth2024low,nicke2025tissue}.

Nevertheless, the development of DL models in computational pathology still relies heavily on expert-annotated data, whether for fully supervised training or for fine-tuning in the context of FMs. Obtaining such high-quality annotations is time-consuming and cognitively demanding~\citep{montezuma2023}, motivating a fundamental question: given an unlabeled histopathology dataset of $N$ samples and a budget of $k \ll N$ annotations, which $k$ WSIs or tiles, i.e., which coreset, should be selected for expert annotation to maximize downstream performance? Selecting a coreset without access to prior label information remains challenging, and, to date, no systematic and reproducible strategy has been established and validated for histopathology.

In computer vision, coreset selection methods have been proposed, but with the primary goal of reducing computational cost during training rather than optimizing annotation efforts~\citep{coleman2019selection,zheng2025elfs,griffin2026zero}. Consequently, most methods were formulated with access to labels and only subsequently adapted to the label-free regime. Existing approaches can be broadly grouped into three categories. \emph{Coverage-based} methods select samples according to their distribution in the embedding space, aiming to minimize redundancy among selected samples. \emph{Importance-based} methods assign each sample a scalar importance score, derived from training dynamics or embedding geometry, and select samples based on this score alone. \emph{Hybrid} methods combine these two principles by jointly accounting for the individual importance of each sample and the redundancy among selected samples.

However, we show that existing coreset selection methods cannot be reliably transferred to histopathology datasets for two main reasons. First, existing label-free coreset selection methods usually rely on hyperparameters that were selected based on downstream performance on a limited set of natural-image benchmarks (a full list is given in Appendix~\ref{Existing_methods}), and neither the values nor the ranges of robustness established in this context are necessarily transferable. For instance, InfoMax~\citep{tan2025data} was reported to be robust to its pairwise weight $\alpha$ over the range $[0.3, 4]$ on CC12M. On PANDA at a 2\% retention rate, however, varying $\alpha$ within this range yields quadratic weighted Cohen's $\kappa$ (QWK) values between 0.58 and 0.70.
Similarly, ELFS~\citep{zheng2025elfs} sets the number of pseudo-classes $\tilde{C}$ to the number of classes in the dataset. On TCGA-BRCA (binary classification) at a 5\% retention rate, varying $\tilde{C}$ from 2 to 20 yields test balanced accuracies between 0.71 and 0.85, with the prescribed value $\tilde{C} = 2$ performing worst. Moreover, this rule does not transfer to the four-class segmentation task of BEETLE, where individual tiles do not carry a single class label; varying $\tilde{C}$ from 4 to 25 at a 0.02\% retention rate yields test weighted Dice values between 0.70 and 0.84.
These examples show that (1)~a range reported as robust on natural images can produce substantial variation on histopathology data; (2)~the recommended value can be among the worst choices; and (3)~the default may be undefined, as for segmentation. Moreover, it is also unclear whether hyperparameters optimized for histopathology generalize across different cancer types and tasks, or whether they need to be determined separately for each new application, which may itself require additional labelled data in contradiction with the main goal of coreset selection. This motivates a genuinely hyperparameter-free coreset selection method for computational pathology.
Second, existing methods generally provide no \emph{a priori} guarantee on the quality of the selected subset with respect to a specified objective function. Such a guarantee is of particular value in histopathology, where every selected sample is converted into a dense annotation at a cost measured in pathologist hours, and where this cost cannot be recovered once a selection has proven poor.

To address these two challenges, we propose GCcore, a novel label-free coreset selection method that iteratively adds the sample from the remaining pool that maximizes the global coverage (GC) of the embedding space (Algorithm~\ref{alg:coreset_selection}). It accounts for both coverage and importance without a weighting term, unlike existing hybrid methods: a sample is selected only when it is both representative, i.e., surrounded by many samples, and novel, i.e., far from the samples already selected. We compare GCcore with 14 baselines on 10 pathology datasets spanning WSI classification, tile classification and segmentation (Section~\ref{exp}). GCcore achieves the best mean rank on all three task families (2.17, 1.88 and 2.75, respectively; Fig.~\ref{fig:three_task_mean_rank}), ranking first on six datasets and within the top three on nine, while remaining computationally efficient: it selects a coreset of 100 samples from a pool of 100{,}000 in 89~s. We further prove GCcore attains a lower bound guarantee $1-1/e$ on the GC objective on any dataset and at every coreset size (Section~\ref{methods}). 

Our contributions are:
(1)~\textbf{A new selection principle.} We formulate label-free coreset selection as the maximization of a scale-invariant GC objective, defined as the area under the coverage--radius curve, and show that its greedy maximization balances representativeness and diversity without any weighting term. The resulting method, GCcore, is hyperparameter-free and deterministic, applies to the embeddings of any FM, and requires no labels.
(2)~\textbf{An \emph{a priori} guarantee at every budget.}  We prove that GC is monotone and submodular, so that the greedy solution attains at least $1-1/e$ of the maximum GC achievable at every coreset size. Its practical value is that pathologists can annotate samples in the order chosen and stop whenever the budget is exhausted, with a certified bound on the coverage of the annotated subset, whereas existing label-free methods generally offer no such assurance.
(3)~\textbf{Comprehensive evaluation and superior performance.} We benchmark 15 label-free coreset selection methods on 10 histopathology datasets and three tasks, including segmentation, the most annotation-intensive task in histopathology, on which none of the existing methods had previously been evaluated. GCcore achieves the best overall mean rank (2.2), whereas no baseline is consistently competitive; moreover, the rank of a baseline shifts by 0.9 to 2.1 positions on average, and by up to 4.5 positions on a single dataset, depending solely on its own hyperparameters within recommended intervals (Table~\ref{tab:rank_sd}).

\section{Related work}
We discuss below the existing coreset selection methods that are label-free by design or that can be adapted into a label-free variant (see Appendix~\ref{Existing_methods} for a full list), including coverage-based, importance-based, and hybrid methods.

\paragraph{Coverage-based selection.}
Coverage-based methods select samples according to their arrangement in the embedding space and differ in the covering criterion they optimize. Clustering-based selection partitions the embedding space into groups and selects representatives from each~\citep{macqueen1967multivariate,ester1996density,arthur2007k}; how samples are allocated across clusters (uniformly or proportionally to cluster size) and which representatives are chosen within each cluster (the sample nearest the centroid, or samples stratified by distance from it) are hyperparameters with no principled label-free setting. K-center greedy~\citep{sener2017active} minimizes the radius needed to cover the single least-covered point; the selection is therefore dictated by the extremes of the distribution and systematically favors outliers over representative samples.

\paragraph{Importance-based selection.}
Importance-based methods assign a scalar score to each sample and select samples based on this score alone, thereby treating set-level information as a sum of independent contributions. The earliest methods derive scores from training dynamics~\citep{toneva2018empirical, pleiss2020identifying, paul2021deep}. All require labels and training on the full dataset, and are thus inapplicable to annotation planning.
Prototype~\citep{sorscher2022beyond} clusters features with K-means and scores each sample by its distance to its cluster centroid, keeping atypical samples when the budget is abundant and prototypical samples when it is scarce. 
Moderate coreset~\citep{xia2022moderate} instead keeps samples whose distance to their class center is closest to the class median. Without labels, both require the number of clusters, and Prototype additionally requires the budget threshold at which the pruning mode switches. 
CCS~\citep{zheng2022coverage} prunes a budget-dependent fraction of the hardest samples by AUM and samples uniformly from equal-width bins of the remaining score range; the pruning rate and the number of bins must be set for each budget.  
Finally, ELFS~\citep{zheng2025elfs} recovers training-dynamics scores through pseudo-labels obtained by deep clustering and prunes both the hardest and the easiest samples. Its selection inherits the errors of the clustering step, depends on the proxy model's training settings, and requires the number of pseudo-classes and the pruning fraction.

\paragraph{Hybrid methods.}
Hybrid methods account for both the importance of individual samples and the redundancy among selected samples, but predominantly through heuristic procedures that require weighting the two. D\textsuperscript{2}-Pruning~\citep{maharana2024mathbb} builds a $k$-nearest-neighbor graph over difficulty-weighted samples, propagates neighborhood difficulty into each node's score by forward message passing, and then repeatedly selects the highest-scoring node while discounting its neighbors by reverse message passing. The number of neighbors and the forward and reverse message-passing weights, which balance difficulty against redundancy, must be fixed in advance. InfoMax~\citep{tan2025data} defines an explicit set-level objective, the sum of per-sample importance scores minus an $\alpha$-weighted sum of pairwise similarities among selected samples. In its unsupervised mode, the per-sample term is the Prototype score~\citep{sorscher2022beyond}, so InfoMax inherits the hyperparameters of that score in addition to $\alpha$. ZCore~\citep{griffin2026zero} draws random probes in random low-dimensional subspaces, rewards the sample nearest each probe while penalizing its neighbors, and retains the top-$k$ scorers; the procedure is stochastic and trades runtime against performance through the number of sampling iterations, which must be set together with the dimensionality of the subspaces.

In summary, every method above exposes at least one hyperparameter whose value was calibrated on natural-image benchmarks and cannot be re-tuned on a new cohort without the labels that coreset selection is meant to spare. We treat the random seed as a hyperparameter as well, since for stochastic methods, such as K-center, K-means and ZCore, it determines the selected subset. Moreover, none of these methods offers an \emph{a priori} guarantee for an objective that reflects coverage of the data distribution as a whole. A third limitation concerns evaluation: none of these methods have been assessed on segmentation, the most annotation-intensive task in histopathology, and the one for which the number of clusters required by clustering-based selection, Prototype and ELFS is hardest to set, since it does not correspond to any observable notion of class.

\section{Methods}
\label{methods}

As illustrated in Fig.~\ref{fig:overview}(a), the overall pipeline consists of three steps. First, the entire unlabeled dataset $\mathcal{S}=\{x_1,x_2,\dots,x_N\}$ is encoded by a pre-trained FM $f(\cdot)$ into embeddings $\mathbf{E}=\{\mathbf{e}_1,\mathbf{e}_2,\dots,\mathbf{e}_N\}$, where $\mathbf{e}_i=f(x_i)\in\mathbb{R}^{D}$, $N$ is the number of samples, and $D$ is the feature dimension. Second, our proposed method, the global coverage-based coreset selection algorithm (GCcore), identifies a representative subset $\mathcal{S}'\subset \mathcal{S}$ of size $\lvert \mathcal{S}'\rvert=k\ll N$ that covers the data distribution in the embedding space through iterative global coverage maximization; the selection mechanism is illustrated in Fig.~\ref{fig:overview}(b) and detailed below. Finally, the selected coreset is annotated by pathologists to provide ground-truth labels, which serve as the basis for supervised training. Table~\ref{tab:notation} in Appendix~\ref{notation} summarizes the notation used throughout.


\begin{figure}[t]
  \centering
  \includegraphics[width=0.85\linewidth]{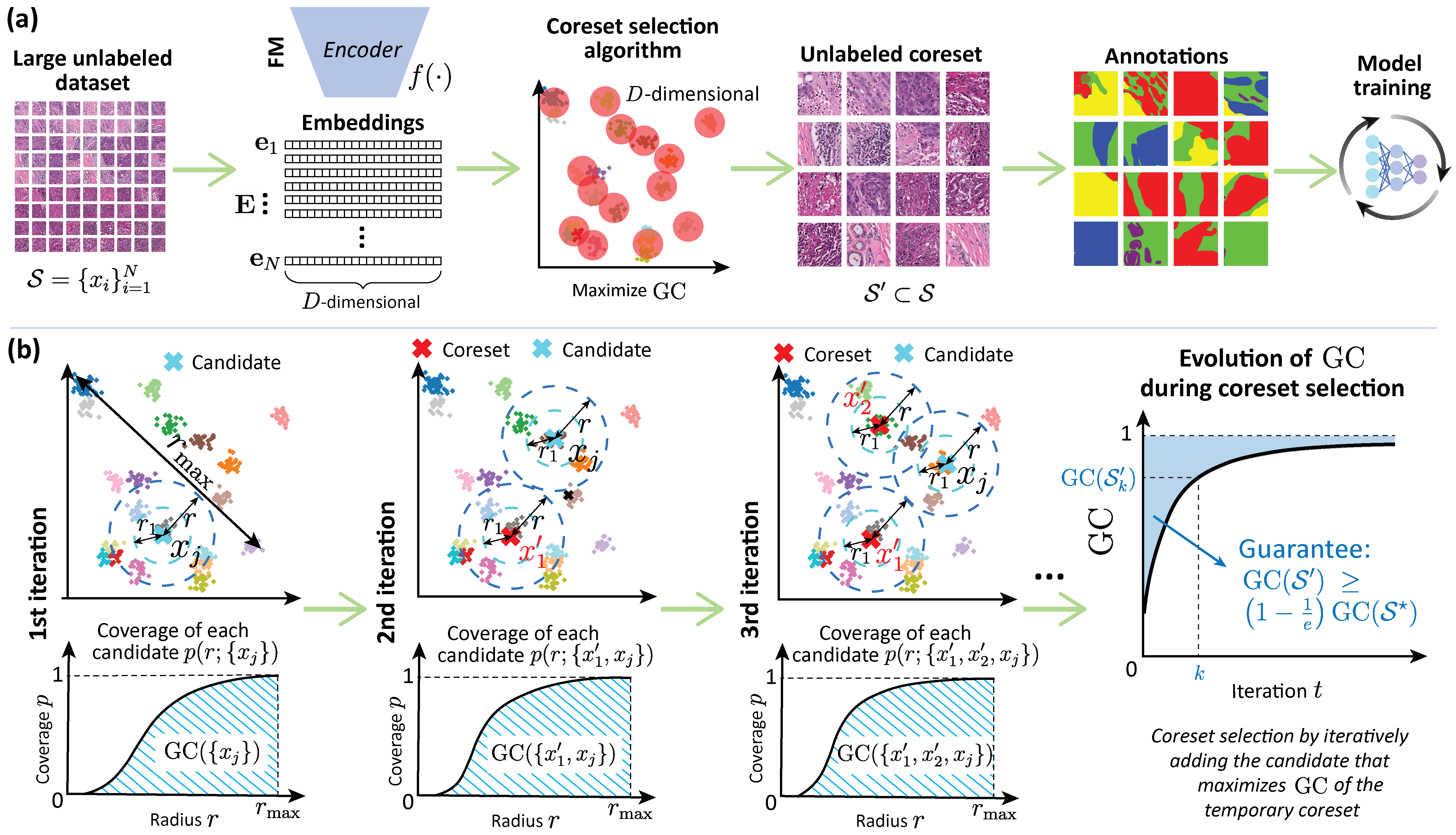}
  \caption{(a) Overview of the label-free coreset selection methodology. Embeddings of the full training set are first extracted using any histopathology FM and represented in feature space, after which the proposed algorithm identifies a subset of samples that collectively maximize coverage, defining the coreset. Annotations for these selected samples are then obtained from pathologists and used to train any task-specific model. (b) Schematic illustration of the proposed coreset selection algorithm. Each iteration adds the single sample whose marginal gain in $\mathrm{GC}$, evaluated jointly with the samples already selected, is largest, so that $\mathrm{GC}$ increases monotonically as the coreset grows.}
  \label{fig:overview}
\end{figure}


\paragraph{Coverage measure.}
A natural way to quantify how well a subset $\mathcal{T}\subseteq \mathcal{S}$ covers the dataset is the fraction of samples that lie within a radius $r$ of their nearest subset member. For each sample $x_i$, let
\begin{equation}
    d_i(\mathcal{T}) \;=\; \min_{x_j\in \mathcal{T}}\lVert \mathbf{e}_i-\mathbf{e}_j\rVert_1 , \qquad \forall\, x_i \in \mathcal{S} ,
    \label{eq:di}
\end{equation}
denote its nearest-neighbor distance to $\mathcal{T}$, and collect these into $\mathbf{d}(\mathcal{T})=\big(d_1(\mathcal{T}),\dots,d_N(\mathcal{T})\big)^{\!\top}\in\mathbb{R}^{N}$, where $\lVert\cdot\rVert_1$ is the $\ell_1$ (Manhattan) norm, $\lVert \mathbf{e}_i-\mathbf{e}_j\rVert_1=\sum_{m=1}^{D}\lvert e_{im}-e_{jm}\rvert$, which we adopt throughout unless stated otherwise. The coverage at radius $r$ is
\begin{equation}
    p(r;\mathcal{T}) \;=\; \frac{1}{N}\bigl\lvert\{\,x_i \in \mathcal{S} : d_i(\mathcal{T})\leq r\,\}\bigr\rvert , 
    \label{eq:pr}
\end{equation}
where $\lvert\cdot\rvert$ denotes the cardinality of a set. The scalar $p(r;\mathcal{T})$ is highly sensitive to the choice of $r$, which is arbitrary and varies with the scale of the embeddings across datasets. To obtain a scale-robust measure, we integrate $p(r;\mathcal{T})$ over all radii up to the maximum pairwise distance $r_{\max}=\max_{i<j}\lVert \mathbf{e}_i-\mathbf{e}_j\rVert_1$, a constant determined by $\mathcal{S}$ alone and strictly positive whenever $\mathcal{S}$ contains at least two distinct embeddings, and normalize, defining the global coverage as the area under the coverage--radius curve:
\begin{equation}
    \mathrm{GC}(\mathcal{T}) \;=\; \frac{1}{r_{\max}}\int_{0}^{r_{\max}} p(r;\mathcal{T})\,dr .
    \label{eq:gc}
\end{equation}
 
\paragraph{Global objective.}
Coreset selection is then formulated as the maximization of coverage subject to a cardinality budget,
\begin{equation}
    \mathcal{S}^{\star} \;=\; \argmax_{\mathcal{T}\subseteq \mathcal{S},\;\lvert \mathcal{T}\rvert=k}\; \mathrm{GC}(\mathcal{T}).
    \label{eq:objective}
\end{equation}
In contrast to per-sample importance-scoring methods, which assign each example a fixed importance score independently of which other examples are selected, Eq.~\ref{eq:objective} specifies a single global objective defined over the selected set as a whole, enabling the approximation guarantee established in Appendix~\ref{sec:guarantee}.
 
\paragraph{Efficient coverage evaluation.}
Directly estimating $\mathrm{GC}$ requires sampling
$r_{\text{steps}}$ radii to evaluate the coverage $p(r;\mathcal{T})$ and integrating the resulting curve, incurring an $O(N r_{\text{steps}})$ cost per candidate evaluation. We avoid the radius sweep entirely via the tail-integral identity for the nonnegative quantities $d_i(\mathcal{T})$, which satisfy $d_i(\mathcal{T})\le r_{\max}$ for every $x_i\in \mathcal{S}$:
\begin{equation}
    \bar{d}(\mathcal{T}) \;=\; \frac{1}{N}\sum_{i=1}^{N} d_i(\mathcal{T}) \;=\; \frac{1}{N}\sum_{i=1}^{N}\int_{0}^{r_{\max}} \mathbf{1}\{d_i(\mathcal{T})>r\}\,dr .
    \label{eq:tail}
\end{equation}
Substituting the definition of $p(r;\mathcal{T})$ from Eq.~\ref{eq:pr} gives
\begin{equation}
    \bar{d}(\mathcal{T}) \;=\; \int_{0}^{r_{\max}} \frac{1}{N}\sum_{i=1}^{N}\mathbf{1}\{d_i(\mathcal{T})>r\}\,dr
    \;=\; \int_{0}^{r_{\max}}\bigl(1-p(r;\mathcal{T})\bigr)\,dr ,
    \label{eq:dbar}
\end{equation}
and comparison with Eq.~\ref{eq:gc} yields
\begin{equation}
    \mathrm{GC}(\mathcal{T}) \;=\; 1-\frac{\bar{d}(\mathcal{T})}{r_{\max}} .
    \label{eq:equiv}
\end{equation}
The global objective of maximizing $\mathrm{GC}$ is therefore equivalent to minimizing the mean nearest-coreset distance $\bar{d}$. Beyond removing the $O(r_{\text{steps}})$ factor and its discretization error, we note that, when a sample $x_j$ is added to the coreset, the nearest-coreset distances need only be updated by an elementwise minimum,
\begin{equation}
    d_i(\mathcal{T}\cup\{x_j\}) \;=\; \min\!\big( d_i(\mathcal{T}),\, \lVert \mathbf{e}_{i} - \mathbf{e}_{j} \rVert_1 \big), \qquad \forall\, x_i \in \mathcal{S} .
    \label{eq:min_update}
\end{equation}
Maintaining $\mathbf{d}$ thus lets each candidate be scored by a single $O(ND)$ update, rather than the $O(NkD)$ of recomputing all nearest distances from scratch.

\paragraph{Greedy construction.}
As summarized in Algorithm~\ref{alg:coreset_selection}, we optimize
Eq.~\ref{eq:objective} greedily. Starting from $\mathcal{S}'=\varnothing$, each iteration scores every candidate $x_j$ in the remaining pool by its marginal gain,
\begin{equation}
    g_j(\mathcal{S}') \;=\; \bar{d}(\mathcal{S}') - \bar{d}\big(\mathcal{S}'\cup\{x_j\}\big)
    \;=\; \frac{1}{N}\sum_{i=1}^{N}
          \big( d_i(\mathcal{S}') - \lVert \mathbf{e}_{i}-\mathbf{e}_{j}\rVert_1 \big)_{+} ,
    \label{eq:gain}
\end{equation}
where $(\cdot)_{+}=\max\{0,\cdot\}$ and the second equality follows from Eq.~\ref{eq:min_update}, and permanently adds the maximizer. Because appending a sample can only decrease each $d_i$, every $g_j$ is non-negative (monotonicity) and non-increasing across iterations (submodularity), the diminishing-returns property proved in Lemma~\ref{lem:submod}. A gain computed at an earlier iteration is therefore always an upper bound on the current one, which the lazy greedy variant of \citep{minoux2005accelerated} exploits: the gains are kept in a max-heap $\mathcal{Q}$, and each iteration re-evaluates only the top element, re-inserting it if its exact gain falls below the next key (Algorithm~\ref{alg:coreset_selection}, lines~10--19). A recomputed top gain that still dominates the heap is provably the exact maximizer, so most candidate evaluations are skipped while the selection remains identical to naive greedy.
 
\paragraph{Global optimality guarantee.}
In Appendix~\ref{sec:guarantee}, we prove that the objective in Eq.~\ref{eq:objective} is monotone submodular, so that the greedy algorithm (Algorithm~\ref{alg:coreset_selection}) attains at least $(1-1/e)$ of the maximum coverage achievable under the same budget \citep{nemhauser1978analysis}, a dataset-independent,
\emph{a priori} lower bound that holds at every coreset size.

\section{Experiments}
\label{exp}
We assessed the performance of GCcore on 10 datasets (dataset details in Appendix~\ref{Datasets}) covering three common yet distinct computational pathology tasks: WSI classification, tile classification and tissue segmentation. Collectively, these datasets span prostate, colorectal, breast, lung and gastric tissue, as well as a pan-cancer cohort, with several datasets collected across multiple medical centers.

\subsection{Experimental setup}
\label{exp_setup}
For each dataset, we first extracted embeddings for all images using an FM. Coresets of varying sizes were then selected from the embeddings of the full training set, and used to train a model comprising a frozen FM as the encoder followed by a trainable task-specific head. Performance was evaluated on the held-out test set. Unless otherwise specified, test performance is measured using QWK for ordinal multiclass classification, balanced accuracy for binary classification, and weighted Dice for segmentation, following the community standard~\citep{ding2025multimodal}. For each dataset, we defined four coreset sizes in the low-retention regime (0.01\%-20\% depending on the dataset, as detailed in Appendix~\ref{dataset_summary}), corresponding to the selection of only a few tens to a few hundred samples for annotation, a quantity that could realistically be annotated by a pathologist in clinical practice.

\paragraph{Baselines} We compared our deterministic GCcore against 14 label-free baselines. For each baseline we sweep a single characteristic hyperparameter over five values, sampled within the range reported as robust on natural-image benchmarks or around the reported default value, holding the remainder at their default settings, and report the mean test performance across the five configurations. We sweep rather than fix a value because the defaults and robust ranges of these hyperparameters, established on natural images, generally do not transfer reliably to histopathology (see Appendix~\ref{app:baseline_hparams}), so any single value could favor or penalize a baseline. Averaging over five values instead estimates the performance a practitioner can expect on a new cohort without labels for tuning, and the standard deviation (SD) quantifies the sensitivity of the baseline to its own hyperparameter. 


(1) \textbf{Random} selection, across random seeds.
(2) \textbf{K-center}~\citep{sener2017active}, across random seeds.
(3) \textbf{K-means}~\citep{macqueen1967multivariate}, across the number of clusters $M$. The budget is allocated to clusters in proportion to their size, and within each cluster we perform radial stratified sampling at evenly spaced distances from the centroid, so that the subset spans each cluster from its core to its periphery.
(4) \textbf{Proto}~\citep{sorscher2022beyond}, across the number of clusters $M$. We define the \emph{Proto score} of a sample as the cosine distance to its assigned cluster centroid, so larger values indicate less prototypical samples. We retain either the hardest (least prototypical) samples, corresponding to the data-abundant regime, or the easiest (most prototypical) ones, corresponding to the data-scarce regime, denoted \textbf{Proto-Hard} and \textbf{Proto-Easy}.
(5) \textbf{Moderate}~\citep{xia2022moderate}, across the number of clusters $M$, retaining the samples whose Proto score lies closest to the per-cluster median.
(6) \textbf{CCS}~\citep{zheng2022coverage}, across the hard-cutoff rate $\beta$. Using the Proto score as difficulty, CCS prunes the hardest $\beta$ fraction, stratifies the remainder into equal-width difficulty bins, and allocates the budget to the least-populated bins first.
(7) \textbf{D\textsuperscript{2}}~\citep{maharana2024mathbb}, across the number of nearest neighbors $k_{\mathrm{nn}}$ in the graph, with a uniform initial difficulty score (\textbf{D\textsuperscript{2}-Uniform}), and with the Proto score (\textbf{D\textsuperscript{2}-Proto}).
(8) \textbf{ELFS}~\citep{zheng2025elfs}, performing double-end pruning with either the AUM~\citep{pleiss2020identifying} of a pseudo-label linear probe (\textbf{ELFS-Pseudo}), swept across the number of pseudo-classes $\tilde{C}$, or the Proto score (\textbf{ELFS-Proto}), swept across the number of clusters $M$.
(9) \textbf{InfoMax}~\citep{tan2025data}, across the pairwise weight $\alpha$, with the Proto score as the per-sample informativeness term: higher scores are more informative in hard mode (\textbf{InfoMax-Hard}), lower in easy mode (\textbf{InfoMax-Easy}).
(10) \textbf{ZCore}~\citep{griffin2026zero}, across random seeds.

\begin{table}[t]
\centering
\caption{%
Mean rank of the 15 coreset selection methods on each dataset; the subscript denotes the SD of that rank across five hyperparameter values. GCcore is deterministic and reports no SD. \textbf{Mean}: average across the 10 datasets.
}
\label{tab:rank_sd}
\small
\setlength{\tabcolsep}{0pt}
\begin{tabular*}{\linewidth}{@{\extracolsep{\fill}} l ccc cccc ccc c @{}}
\toprule
\multirow[c]{3}{*}{\makecell[l]{Selection\\ method}}
& \multicolumn{3}{c}{WSI classification} & \multicolumn{4}{c}{Tile classification}
& \multicolumn{3}{c}{Segmentation} & \multirow[c]{3}{*}{\textbf{Mean}} \\
\cmidrule(lr){2-4} \cmidrule(lr){5-8} \cmidrule(lr){9-11}
& PANDA & \makecell{IMP-\\CRS} & \makecell{TCGA-\\BRCA} & \makecell{WSSS4\\LUAD}
& SICAP & \makecell{CRC-\\100k} & \makecell{Deep\\HP} & \makecell{BEE\\TLE}
& panCK & \makecell{CAME-\\LYON16} & \\
\midrule
GCcore       & $3.0$ & $\underline{\mathbf{1.0}}$ & $\underline{\mathbf{2.5}}$ & $\underline{\mathbf{1.0}}$ & $2.5$ & $3.0$ & $\underline{\mathbf{1.0}}$ & $\underline{\mathbf{1.3}}$ & $5.8$ & $\underline{\mathbf{1.3}}$ & $\underline{\mathbf{2.2}}$ \\
Random       & \ms{6.5}{1.8} & \ms{4.8}{1.2} & \ms{2.8}{1.2} & \ms{5.5}{1.9} & \ms{3.3}{1.9} & \ms{5.0}{1.1} & \ms{3.5}{0.9} & \ms{2.3}{1.6} & \ms{4.0}{1.6} & \ms{5.5}{1.1} & \ms{4.3}{1.4} \\
K-means      & \msb{2.0}{0.8} & \ms{3.0}{1.4} & \ms{5.8}{2.7} & \ms{6.6}{1.6} & \ms{6.0}{3.1} & \msb{1.3}{0.9} & \ms{5.8}{0.7} & \ms{3.5}{0.8} & \ms{5.0}{2.0} & \ms{7.0}{1.8} & \ms{4.6}{1.6} \\
ELFS-Proto   & \ms{5.8}{1.3} & \ms{4.8}{1.4} & \ms{6.8}{3.1} & \ms{8.8}{1.4} & \msb{2.0}{3.0} & \ms{7.3}{1.0} & \ms{3.8}{0.5} & \ms{4.3}{1.2} & \ms{8.5}{1.7} & \ms{4.8}{0.8} & \ms{5.7}{1.5} \\
K-center     & \ms{3.0}{1.5} & \ms{6.3}{0.8} & \ms{11.3}{4.5} & \ms{6.5}{3.0} & \ms{4.8}{1.7} & \ms{4.0}{1.5} & \ms{8.3}{0.4} & \ms{6.5}{0.4} & \msb{3.3}{0.6} & \ms{2.8}{0.7} & \ms{5.7}{1.5} \\
Moderate     & \ms{7.8}{2.8} & \ms{9.0}{1.5} & \ms{4.5}{2.4} & \ms{9.1}{2.6} & \ms{5.8}{1.4} & \ms{4.3}{1.9} & \ms{4.0}{1.0} & \ms{3.8}{1.2} & \ms{8.3}{2.8} & \ms{7.3}{2.7} & \ms{6.4}{2.0} \\
ELFS-Pseudo  & \ms{4.5}{1.7} & \ms{6.0}{2.4} & \ms{10.3}{3.4} & \ms{4.0}{1.4} & \ms{8.3}{2.6} & \ms{8.0}{1.5} & \ms{7.0}{0.8} & \ms{6.8}{0.9} & \ms{9.0}{2.2} & \ms{4.0}{1.1} & \ms{6.8}{1.8} \\
CCS          & \ms{7.3}{0.9} & \ms{10.8}{2.7} & \ms{8.8}{2.6} & \ms{6.5}{1.2} & \ms{6.0}{1.7} & \ms{8.5}{2.3} & \ms{3.5}{0.2} & \ms{8.5}{3.3} & \ms{6.3}{2.1} & \ms{11.3}{3.8} & \ms{7.7}{2.1} \\
InfoMax-Easy & \ms{8.8}{3.3} & \ms{10.3}{0.3} & \ms{10.3}{1.9} & \ms{7.3}{3.8} & \ms{10.8}{0.7} & \ms{6.5}{3.3} & \ms{8.5}{1.7} & \ms{8.5}{1.5} & \ms{6.8}{3.8} & \ms{6.8}{0.3} & \ms{8.4}{2.0} \\
ZCore        & \ms{11.5}{1.3} & \ms{8.8}{0.8} & \ms{11.8}{1.0} & \ms{9.5}{2.5} & \ms{10.3}{0.9} & \ms{7.3}{2.0} & \ms{11.0}{0.9} & \ms{10.3}{0.4} & \ms{5.5}{1.9} & \ms{6.0}{2.8} & \ms{9.2}{1.4} \\
InfoMax-Hard & \ms{11.0}{1.4} & \ms{5.5}{1.5} & \ms{10.3}{1.7} & \ms{5.0}{1.2} & \ms{12.0}{1.7} & \ms{11.0}{2.0} & \ms{13.0}{0.4} & \ms{10.5}{0.5} & \ms{9.3}{1.4} & \ms{9.8}{2.3} & \ms{9.7}{1.4} \\
D\textsuperscript{2}-Proto     & \ms{9.3}{2.7} & \ms{9.0}{2.7} & \ms{6.3}{1.1} & \ms{8.8}{2.2} & \ms{12.0}{2.1} & \ms{12.8}{0.4} & \ms{13.5}{1.1} & \ms{13.5}{0.3} & \ms{6.5}{2.4} & \ms{11.8}{3.9} & \ms{10.3}{1.9} \\
D\textsuperscript{2}-Uniform   & \ms{11.0}{0.9} & \ms{14.3}{0.1} & \ms{4.3}{3.5} & \ms{12.8}{4.5} & \ms{7.5}{1.7} & \ms{14.8}{0.4} & \ms{11.8}{1.0} & \ms{14.3}{0.4} & \ms{14.0}{1.9} & \ms{14.9}{0.2} & \ms{11.9}{1.5} \\
Proto-Hard   & \ms{14.8}{1.6} & \ms{12.0}{1.4} & \ms{14.5}{2.5} & \ms{13.8}{2.0} & \ms{14.5}{3.2} & \ms{12.3}{0.4} & \ms{14.3}{0.5} & \ms{12.0}{0.5} & \ms{13.0}{1.8} & \ms{13.0}{1.5} & \ms{13.4}{1.5} \\
Proto-Easy   & \ms{14.0}{2.1} & \ms{14.8}{0.2} & \ms{10.3}{2.0} & \ms{15.0}{0.3} & \ms{14.5}{1.1} & \ms{14.3}{0.2} & \ms{11.3}{1.3} & \ms{14.3}{0.1} & \ms{15.0}{0.1} & \ms{14.1}{1.1} & \ms{13.7}{0.9} \\
\bottomrule
\end{tabular*}
\end{table}


\paragraph{WSI classification}
\label{exp:wsi-classification}
For WSI classification, we conducted experiments on PANDA~\citep{bulten2022artificial}, IMP-CRS~\citep{oliveira2021cad, neto2022imil4path, neto2024interpretable}, and TCGA-BRCA datasets. We used the PRISM slide encoder~\citep{shaikovski2024prism} to extract the slide embeddings of WSIs. The classification head is a single linear layer that maps a slide embedding $\mathbf{e} \in \mathbb{R}^{D}$ to $C$ logits, where $D$ is the native dimensionality of the slide encoder and $C$ is the number of classes defined by the dataset label. Since the classes are mutually exclusive, the logits are normalized by a softmax and the head is trained with class-weighted cross-entropy loss.

\paragraph{Tile classification}
\label{exp:tile-classification}
The tile classification task encompasses experiments on four datasets, namely WSSS4LUAD~\citep{han2022wsss4luad}, SICAP~\citep{silva2020going}, CRC-100k~\citep{kather2018100}, and DeepHP~\citep{goncalves2022deephp, goncalves2023deephp}. We used the H-optimus-1 tile encoder~\citep{scalbert2026hoptimus1} to extract the embeddings of tiles. For SICAP, CRC-100k and DeepHP, the classification head is identical to that of WSI classification, except that $D$ is the dimensionality of the tile encoder. In WSSS4LUAD the tile labels are not mutually exclusive,
since one tile may contain several tissue types; we therefore replace the softmax with $C=3$ independent sigmoids applied to the logits of the same linear layer and train with binary cross-entropy loss. Performance on WSSS4LUAD is measured using macro AUROC, averaged over the three classes. Performance on CRC-100k is measured using balanced accuracy, since its nine tissue classes are nominal and QWK presumes an ordinal label scale.

\paragraph{Segmentation}
\label{exp:tissue-segmentation}
We experimented on BEETLE~\citep{lems2025multicentric}, panCK~\citep{komura2023restaining}, and CAMELYON16~\citep{ehteshami2017diagnostic} for segmentation. For coreset selection we use the global [CLS] token of H-optimus-1~\citep{scalbert2026hoptimus1} as the representation of each tile, as in the tile-classification experiments. For training and evaluation, however, the segmentation head instead uses the patch tokens, reshaped into a two-dimensional grid that retains the spatial correspondence between each token and the region it covers within the input tile. The segmentation head comprises four hierarchical upsampling stages producing pixel-wise class predictions at the working resolution, as detailed in Appendix~\ref{seg_head}. It is trained with class-weighted cross-entropy, with unannotated pixels, the ignore class in BEETLE and background in CAMELYON16, excluded from both the loss and the evaluation metrics.

\begin{table}[t]
\centering
\caption{%
Test performance of the 15 selection methods at the lowest retention rate (0.01\%--5\%, depending on the dataset; see Appendix~\ref{dataset_summary}) resulting in only a few tens of samples selected for annotation; the subscript is the SD across five hyperparameter values. All values are multiplied by 100. GCcore is deterministic and reports no SD. D\textsuperscript{2}-Uniform on DeepHP is marked because its selected subset contains samples from only one of the two classes.}
\label{tab:acc_lowest}
\small
\setlength{\tabcolsep}{0pt}
\begin{tabular*}{\linewidth}{@{\extracolsep{\fill}} l ccc cccc ccc @{}}
\toprule
\multirow[c]{2}{*}{\makecell[l]{Selection\\ method}}
& \multicolumn{3}{c}{WSI classification} & \multicolumn{4}{c}{Tile classification}
& \multicolumn{3}{c}{Segmentation} \\
\cmidrule(lr){2-4} \cmidrule(lr){5-8} \cmidrule(lr){9-11}
& PANDA & \makecell{IMP-\\CRS} & \makecell{TCGA-\\BRCA} & \makecell{WSSS4\\LUAD}
& SICAP & \makecell{CRC-\\100k} & \makecell{Deep\\HP} & \makecell{BEE\\TLE}
& panCK & \makecell{CAME-\\LYON16} \\
\midrule
GCcore       & $73.3$ & $\underline{\mathbf{91.7}}$ & $85.9$ & $\underline{\mathbf{94.8}}$ & $66.8$ & $\underline{\mathbf{86.6}}$ & $\underline{\mathbf{80.0}}$ & $\underline{\mathbf{84.4}}$ & $\underline{\mathbf{76.2}}$ & $\underline{\mathbf{97.8}}$ \\
Random       & \ms{70.8}{5.8} & \ms{87.4}{1.9} & \msb{89.8}{3.0} & \ms{93.5}{1.2} & \ms{62.9}{8.7} & \ms{74.6}{9.0} & \ms{75.0}{3.4} & \ms{81.6}{2.7} & \ms{75.9}{0.9} & \ms{94.9}{1.6} \\
K-means      & \ms{73.7}{3.2} & \ms{86.2}{2.9} & \ms{88.0}{2.9} & \ms{92.2}{1.3} & \ms{68.6}{2.8} & \ms{85.4}{5.9} & \ms{71.7}{4.4} & \ms{84.2}{1.5} & \ms{75.8}{0.7} & \ms{91.2}{4.4} \\
ELFS-Proto   & \msb{74.2}{3.1} & \ms{85.8}{2.3} & \ms{86.4}{2.8} & \ms{91.8}{1.3} & \msb{69.7}{5.3} & \ms{74.7}{6.0} & \ms{72.7}{4.2} & \ms{80.1}{3.6} & \ms{75.3}{0.7} & \ms{95.5}{2.5} \\
K-center     & \ms{71.5}{4.3} & \ms{88.4}{1.1} & \ms{79.9}{8.5} & \ms{92.7}{2.1} & \ms{63.4}{8.5} & \ms{77.0}{9.0} & \ms{65.5}{2.6} & \ms{76.5}{2.1} & \ms{74.4}{0.8} & \ms{96.8}{0.6} \\
Moderate     & \ms{68.1}{3.3} & \ms{85.8}{4.1} & \ms{85.5}{2.8} & \ms{92.9}{2.1} & \ms{62.2}{4.1} & \ms{80.2}{7.1} & \ms{66.6}{7.3} & \ms{81.6}{2.7} & \ms{74.6}{2.5} & \ms{92.2}{3.2} \\
ELFS-Pseudo  & \ms{68.1}{9.2} & \ms{82.6}{7.7} & \ms{84.3}{4.5} & \ms{92.9}{1.6} & \ms{48.2}{10.4} & \ms{81.4}{4.9} & \ms{69.4}{4.5} & \ms{79.5}{5.6} & \ms{74.9}{0.8} & \ms{96.5}{2.4} \\
CCS          & \ms{71.7}{2.9} & \ms{85.6}{2.5} & \ms{80.9}{5.8} & \ms{93.4}{0.6} & \ms{67.2}{7.1} & \ms{73.6}{8.9} & \ms{74.4}{2.3} & \ms{76.1}{10.1} & \ms{75.9}{0.6} & \ms{90.8}{6.3} \\
InfoMax-Easy & \ms{62.4}{9.3} & \ms{76.7}{10.3} & \ms{84.3}{0.4} & \ms{94.0}{3.3} & \ms{51.3}{2.4} & \ms{75.1}{16.2} & \ms{70.8}{4.6} & \ms{79.0}{5.9} & \ms{69.8}{8.7} & \ms{96.3}{0.4} \\
ZCore        & \ms{60.1}{6.4} & \ms{58.9}{8.3} & \ms{76.4}{10.2} & \ms{88.7}{4.1} & \ms{36.6}{3.5} & \ms{64.9}{12.8} & \ms{60.3}{11.3} & \ms{67.6}{4.0} & \ms{68.7}{4.1} & \ms{93.5}{5.8} \\
InfoMax-Hard & \ms{47.3}{24.6} & \ms{83.7}{5.3} & \ms{81.1}{5.8} & \ms{93.4}{3.1} & \ms{37.2}{20.4} & \ms{44.2}{19.7} & \ms{52.6}{4.8} & \ms{71.1}{3.4} & \ms{70.9}{1.8} & \ms{91.8}{5.3} \\
D\textsuperscript{2}-Proto     & \ms{51.5}{11.9} & \ms{51.0}{31.7} & \ms{82.6}{4.5} & \ms{86.5}{5.9} & \ms{35.8}{8.2} & \ms{15.3}{3.6} & \ms{62.5}{9.5} & \ms{36.8}{15.1} & \ms{73.9}{2.8} & \ms{86.3}{4.8} \\
D\textsuperscript{2}-Uniform   & \ms{53.2}{10.8} & \ms{-8.4}{12.7} & \ms{87.2}{4.5} & \ms{89.7}{2.8} & \ms{53.0}{16.3} & \ms{8.5}{1.6} & -- & \ms{44.8}{2.5} & \ms{63.1}{5.0} & \ms{84.1}{0.0} \\
Proto-Hard   & \ms{37.7}{14.7} & \ms{52.1}{31.3} & \ms{74.5}{9.7} & \ms{75.2}{12.3} & \ms{19.0}{15.9} & \ms{17.1}{3.7} & \ms{51.3}{1.5} & \ms{48.3}{4.3} & \ms{63.7}{3.7} & \ms{84.5}{0.8} \\
Proto-Easy   & \ms{40.3}{18.4} & \ms{-12.9}{21.6} & \ms{81.2}{5.2} & \ms{57.6}{26.4} & \ms{34.8}{10.9} & \ms{11.9}{2.4} & \ms{61.9}{2.6} & \ms{46.3}{0.0} & \ms{53.4}{8.6} & \ms{84.1}{0.0} \\
\bottomrule
\end{tabular*}
\end{table}

\subsection{Main results}
\label{sec:main_results}

Tables~\ref{tab:rank_sd} and~\ref{tab:acc_lowest} show that no single baseline is consistently competitive. GCcore achieves the best mean rank over the 10 datasets, 2.2 out of 15, ahead of random selection (4.3), K-means (4.6), and ELFS-Proto and K-center (both 5.7). It ranks first on six of the 10 datasets and within the top three on nine. Every baseline that is competitive on average is weak somewhere: K-means is best on CRC-100k (1.3) but reaches only 7.0 on CAMELYON16, K-center is best on panCK (3.3) but 11.3 on TCGA-BRCA, and ELFS-Proto is best on SICAP (2.0) but 8.8 on WSSS4LUAD. Table~\ref{tab:acc_lowest} shows the same asymmetry in test performance at the lowest retention rate of each dataset. 
GCcore achieves the best test performance on seven of the 10 datasets and trails the best method by at most 3.9 points on the remaining three, whereas a competitive baseline that outperforms GCcore on one dataset can be behind GCcore by up to 11.9 points on another (Random, $+3.9$ on TCGA-BRCA against $-11.9$ on CRC-100k; ELFS-Proto, $+2.9$ on SICAP against $-11.9$ on CRC-100k; K-means, $+2.2$ on TCGA-BRCA against $-8.3$ on DeepHP). 

The subscripts in both tables quantify the sensitivity of a method to its own configuration. For each dataset the subscript is the SD, across five hyperparameter values, of the rank (Table~\ref{tab:rank_sd}) or of the test performance (Table~\ref{tab:acc_lowest}) the method obtains. Because GCcore selects its coreset as a deterministic function of the embeddings and exposes no such hyperparameter, this SD is zero on every dataset, whereas the rank SD of a baseline averages 0.9 to 2.1 positions across datasets and reaches 4.5 positions on a single dataset. This sensitivity is not predicted by how well a baseline ranks: across the 14 baselines, the Spearman correlation between mean rank and mean rank SD is weak and not significant ($\rho = -0.21$, $p = 0.48$). A baseline that ranks well on average is therefore not necessarily one whose rank can be trusted from a single configuration.
Figure~\ref{fig:combined_box} resolves the test-performance subscripts of Table~\ref{tab:acc_lowest} per retention rate, with GCcore drawn as a single level over each group and each box spanning one SD of a baseline. It further illustrates that a baseline mean above the GCcore level does not imply a reliable advantage. At 0.5\% retention on PANDA, K-means with number of clusters $M = 6, 10, 15, 20, 25$ yields 72.3, 71.7, 73.8, 79.2 and 71.3, a mean of 73.7 against 73.3 for GCcore, yet three of the five K-means configurations fall below GCcore and the mean is carried by a single setting. Choosing the value of $M$ that realizes the advantage requires held-out labels, which is what a clinical low-label setting precludes. See Appendix~\ref{GC_deter} for further details.

\paragraph{Computational efficiency}
\label{sec:computational_efficiency}

We measured the selection runtime of GCcore and 14 baselines on a single machine (AMD Ryzen Threadripper PRO 3995WX, 64 cores / 128 threads, 256~GB RAM) using CRC-100k, averaged over five runs. Under low retention rates (20--200 tiles retained from a 100{,}000-tile training set), GCcore averages 86.4~s and is faster than 12 of the 14 baselines, behind only Random (0.0002~s) and K-center (0.49~s). Selecting 100 tiles, for instance, takes GCcore 88.9~s, against 0.0002~s for random and 0.53~s for K-center. In return, GCcore attains a mean balanced accuracy of 0.905 across the four budgets, against 0.868 for random and 0.875 for K-center.


\subsection{Ablation studies}
We investigate the robustness of GCcore to the experimental design and to the data: the choice of FM for extracting the embeddings, a discrepancy between the FM used for coreset selection and the FM used for downstream model training, class imbalance in the dataset, the distance used to instantiate the objective,
and the embedding dimensionality. Figure~\ref{fig:ablation_overview} summarizes the five studies and Appendix~\ref{ablation_FMchoice}--\ref{ablation_dim} give the experimental details and the analysis for each.
(a)~\textbf{FM choice (Table~\ref{tab:rank_sd_fm}).} GCcore is the best-ranked of the 15 selection methods under five of the six feature extractors and is never worse than rank 3.0, giving a mean rank of 1.5 against 3.9 for the strongest baseline; the advantage is therefore a property of the selection method rather than of any particular encoder.
(b)~\textbf{Selection vs.\ training FM (Table~\ref{tab:rank_select_encoder}).}
GCcore attains the best mean rank of all methods (2.7); whenever a stronger FM (PRISM or TITAN) provides the selection embedding, it is among the two best selection methods irrespective of which FM is used for training, so a coreset need not be recomputed for a different downstream encoder.
(c)~\textbf{Class imbalance (Table~\ref{tab:imb_rank_sd}).} Across three simulated imbalanced training sets (imbalance ratio 55.7--231.3) GCcore has the best mean rank (3.3) and is the only strategy placed among the two best on all three; on IMB2 it recovers 90.7\% of the full-data test performance from 19 WSIs, which
Random does not match until 751 WSIs.
(d)~\textbf{Distance metric (Table~\ref{tab:rank_sd_l1l2}).} GCcore with the $\ell_1$ and with the $\ell_2$ distance both rank ahead of every baseline on average (2.7 and 3.4 among 16, against 5.0 for the strongest baseline), so the advantage follows from the global coverage maximization mechanism rather than from the distance used to instantiate it. Note that the guarantee of Eq.~\ref{eq:guarantee} holds for any distance metric.
(e)~\textbf{Embedding dimensionality (Table~\ref{tab:rank_dim}).} After principal component analysis (PCA) retaining 99\% of the explained variance, which removes 73--85\% of the dimensions, GCcore remains the best-ranked method (2.5), suggesting that GCcore can be applied directly to compressed slide representations without re-tuning.

\begin{figure}[t]
  \centering
  \includegraphics[width=\linewidth]{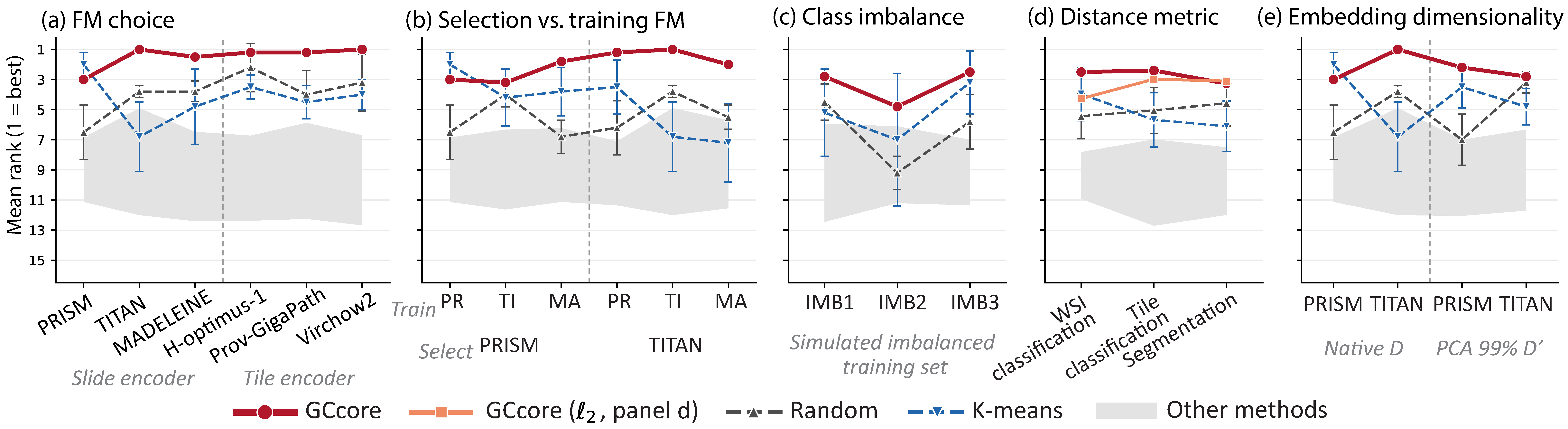}
  \caption{Ablation studies over the experimental design and the data: FM choice, FM choice for selection and for model training, class imbalance in the data, distance metric used in GCcore, and dimensionality of the FM embeddings.
  Experiments in (a) are conducted on PANDA and BEETLE, those in (b), (c) and (e) on PANDA, and those in (d) on all 10 datasets.}
  \label{fig:ablation_overview}
\end{figure}

\section{Conclusion}
To optimize the annotation and training process of DL models in computational pathology, we proposed a novel hyperparameter-free coreset selection algorithm, GCcore, that exploits embeddings from any histopathology FM to identify representative samples within heterogeneous and unlabeled histological datasets. Overall, at clinically realistic retention rates, our approach outperformed existing methods and showed a more consistent performance across different cancer types and tasks. GCcore therefore provides a reliable and reproducible strategy for optimal data selection, identifying informative subsets that support downstream training while substantially reducing the required labeling effort.

\subsection*{Ethics statement}
All datasets used in this work are publicly available datasets obtained from established public sources. We do not collect any new data involving human subjects.

\bibliography{iclr2027_conference}
\bibliographystyle{iclr2027_conference}

\newpage
\appendix

\section{Pseudo-code}
\vspace*{0.3cm}

\begin{algorithm*}[h]
\caption{Coreset Selection by $\mathrm{GC}$ Maximization via Mean Nearest-Neighbor Distance}
\label{alg:coreset_selection}
\begin{algorithmic}[1]
\algnewcommand{\Inputs}[1]{\State \textbf{Inputs:} #1}
\algnewcommand{\Outputs}[1]{\State \textbf{Outputs:} #1}
\vspace{0.2em}
\Inputs
$\mathcal{S} = \{ x_i \}_{i=1}^{N}$: unlabeled dataset; \; $f(\cdot)$: foundation model; \; $k$: target coreset size.
\Outputs
Coreset $\mathcal{S}' \subset \mathcal{S}$ with $|\mathcal{S}'| = k$, and its coverage $\mathrm{GC}(\mathcal{S}')$.
\vspace{0.2em}
\Statex \vspace{0.3em} \hrule \vspace{0.1em}

\State For each $x_i \in \mathcal{S}$, $\mathbf{e}_{i} \gets f(x_i)$; \quad $\mathbf{E} \gets \{ \mathbf{e}_{i} \}_{i=1}^{N} \subset \mathbb{R}^{D}$
    \Comment{Extract $D$-dimensional embeddings}
\State $r_{\max} \gets \max_{i<j} \| \mathbf{e}_{i} - \mathbf{e}_{j} \|_1$
    \Comment{Max pairwise distance; ensures $\mathrm{sim}\!\ge\!0$ (Appendix~\ref{sec:guarantee})}
\State $\mathbf{d} \gets (r_{\max}, \dots, r_{\max}) \in \mathbb{R}^{N}$; \quad $\bar{d} \gets r_{\max}$
    \Comment{Invariant: $\mathbf{d} = \mathbf{d}(\mathcal{S}')$, $\bar{d} = \bar{d}(\mathcal{S}')$; $d_i(\varnothing) \!=\! r_{\max}$}
\State $(\mathcal{S}', \mathcal{R}) \gets (\varnothing, \mathcal{S})$
\For{each $x_j \in \mathcal{R}$}
    \State $g_j \gets \frac{1}{N} \sum_{i=1}^{N} \big( d_i - \| \mathbf{e}_{i} - \mathbf{e}_{j} \|_1 \big)_{+}$
        \Comment{Initial reduction in $\bar{d}$ from adding $x_j$}
\EndFor
\State $\mathcal{Q} \gets$ max-heap over $\{ (g_j, x_j) \}_{x_j \in \mathcal{R}}$
    \Comment{Lazy-greedy priority queue}
\While{$|\mathcal{S}'| < k$}
    \Repeat
        \State $(g_j, x_j) \gets \mathcal{Q}.\textsc{PopMax}()$
        \State $g_j \gets \frac{1}{N} \sum_{i=1}^{N} \big( d_i - \| \mathbf{e}_{i} - \mathbf{e}_{j} \|_1 \big)_{+}$
            \Comment{Recompute exact gain at current $\mathbf{d}$}
        \State $\mathrm{valid} \gets \big( \mathcal{Q} = \varnothing \big) \,\vee\, \big( g_j \ge \mathcal{Q}.\textsc{PeekKey}() \big)$
        \If{$\neg\,\mathrm{valid}$} \State $\mathcal{Q}.\textsc{Push}(g_j, x_j)$ \EndIf
    \Until{$\mathrm{valid}$}
    \State $d_i \gets \min\!\big( d_i,\, \| \mathbf{e}_{i} - \mathbf{e}_{j} \|_1 \big), \;\; \forall\, x_i \in \mathcal{S}$
        \Comment{Eq.~\ref{eq:min_update}; incremental coverage update}
    \State $\bar{d} \gets \bar{d} - g_j$; \quad $(\mathcal{S}', \mathcal{R}) \gets (\mathcal{S}' \cup \{ x_{j} \},\, \mathcal{R} \setminus \{ x_{j} \})$
\EndWhile

\State \Return $\mathcal{S}', \;\; \mathrm{GC}(\mathcal{S}') = 1 - \bar{d} / r_{\max}$
    \Comment{Eq.~\ref{eq:equiv}}
\end{algorithmic}
\end{algorithm*}

\newpage
\section{Notational convention}
\label{notation}
\vspace*{0.3cm}
\begin{table}[h]
\centering
\caption{Comprehensive overview of the notational convention.}
\label{tab:notation}
\begin{tabularx}{\columnwidth}{@{}lX@{}}
\toprule
\textbf{Notation} & \textbf{Description} \\
\midrule
$\mathcal{S}$ & The unlabeled training set, with $\lvert \mathcal{S}\rvert = N$. \\
$x_i \in \mathcal{S}$ & A training sample. \\
$N$ & The number of samples in $\mathcal{S}$. \\
$f(\cdot)$ & The pre-trained FM used as a frozen feature extractor. \\
$\mathbf{E} = \{\mathbf{e}_i\}_{i=1}^{N}$ & The set of embeddings of the full training set. \\
$\mathbf{e}_i = f(x_i) \in \mathbb{R}^{D}$ & The embedding of sample $x_i$. \\
$D$ & The embedding dimension. \\
$\mathcal{S}' \subset \mathcal{S}$ & The coreset selected by GCcore, with $\lvert \mathcal{S}'\rvert = k$. \\
$k$ & The annotation budget and the target coreset size. \\
$\mathcal{T} \subseteq \mathcal{S}$ & An arbitrary subset of $\mathcal{S}$; the argument of the coverage objective. \\
$r$ & A coverage radius. \\
$d_i(\mathcal{T})$ & The distance from sample $x_i$ to its nearest representative in $\mathcal{T}$, see Eq.~(\ref{eq:di}); \\
 & by convention $d_i(\varnothing) = r_{\max}$. \\
$\mathbf{d}(\mathcal{T}) \in \mathbb{R}^{N}$ & The vector of nearest-coreset distances, $\big(d_1(\mathcal{T}),\dots,d_N(\mathcal{T})\big)^{\!\top}$. \\
$e_{im}$ & The $m$-th component of $\mathbf{e}_i$. \\
$p(r; \mathcal{T})$ & The coverage at radius $r$: the fraction of samples within $r$ of $\mathcal{T}$, see Eq.~(\ref{eq:pr}). \\
$r_{\max}$ & The maximum pairwise distance over $\mathcal{S}$, $r_{\max} = \max_{i<j}\lVert \mathbf{e}_i - \mathbf{e}_j\rVert_1$. \\
$\mathrm{GC}(\mathcal{T})$ & The global coverage of $\mathcal{T}$: the area under the coverage--radius curve, \\
 & see Eq.~(\ref{eq:gc}). $\mathrm{GC} \in [0,1]$, with $\mathrm{GC}(\varnothing) = 0$ and $\mathrm{GC}(\mathcal{S}) = 1$. \\
$\mathcal{S}^{\star} \subset \mathcal{S}$ & The optimal coreset, $\mathcal{S}^{\star} = \argmax_{\lvert \mathcal{T}\rvert = k} \mathrm{GC}(\mathcal{T})$, see Eq.~(\ref{eq:objective}). \\
$r_{\text{steps}}$ & The number of sampled radii required by a direct numerical integration of $p(r;\mathcal{T})$. \\
$\bar{d}(\mathcal{T})$ & The mean nearest-coreset distance, $\bar{d}(\mathcal{T}) = \tfrac{1}{N}\sum_{i=1}^{N} d_i(\mathcal{T})$. \\
$g_j(\mathcal{S}')$ & The marginal gain of candidate $x_j$: the reduction in $\bar{d}$ from adding $x_j$ to $\mathcal{S}'$, \\
 & see Eq.~(\ref{eq:gain}). \\
$\mathcal{R}$ & The pool of candidates not yet selected at the current iteration. \\
$\mathcal{Q}$ & The max-heap of candidate gains used by the lazy greedy solver. \\
$\mathrm{sim}(x_i,x_j)$ & The nonnegative similarity $r_{\max} - \lVert \mathbf{e}_i - \mathbf{e}_j\rVert_1 \ge 0$. \\
$c_i(\mathcal{T})$ & The per-sample coverage $\max_{x_j \in \mathcal{T}}\mathrm{sim}(x_i,x_j)$, with $c_i(\varnothing) = 0$. \\
$F(\mathcal{T})$ & The total coverage $\sum_{i=1}^{N} c_i(\mathcal{T}) = N r_{\max}\,\mathrm{GC}(\mathcal{T})$. \\
$t$ & An intermediate cardinality, $t \le k$; the first $t$ samples selected. \\
$C$ & The number of classes defined by the dataset label.\\
$D'$ & The reduced embedding dimension after principal component analysis retaining 99\% of the variance. \\
$g \times g$  & The square patch-token grid produced by the final transformer block. \\
$R$  & The working resolution used for the segmentation task, i.e. the side length of the square grid at which predicted logits and ground-truth masks are aligned. \\
$c$ & A class index, $c \in \{1,\dots,C\}$. \\
$n^{+}_{c}$ & The number of positive samples of class $c$ within the selected coreset $\mathcal{S}'$. \\
$n^{-}_{c}$ & The number of negative samples of class $c$ within the selected coreset $\mathcal{S}'$. \\
$\lambda_{c}$ & The weight applied to the positive term of class $c$ in the multi-label binary cross-entropy loss, $\lambda_{c} = \min(n^{-}_{c}/n^{+}_{c},\, \lambda_{\max})$. \\
$\lambda_{\max}$ & The cap on the positive-class weight, $\lambda_{\max} = 10$. \\
\midrule
$\lVert \cdot \rVert_1$ & The $\ell_1$ (Manhattan) norm, used throughout unless stated otherwise. \\
$\lvert \cdot \rvert$ & The cardinality of a set. \\
$\mathbf{1}\{\cdot\}$ & The indicator function. \\
$(\cdot)_{+}$ & The positive part, $(a)_{+} = \max\{0, a\}$. \\
\bottomrule
\end{tabularx}
\end{table}

\newpage

\section{Hyperparameters in existing label-free selection methods}
\label{Existing_methods}


\newcolumntype{L}[1]{>{\raggedright\arraybackslash}p{#1}}
\newcolumntype{C}[1]{>{\centering\arraybackslash}p{#1}}

\newlength{\lfA}\newlength{\lfB}\newlength{\lfC}\newlength{\lfD}\newlength{\lfE}

\newcommand{\vspan}[2]{\raisebox{0pt}[0pt][0pt]{\parbox[t]{#1}{#2}}}

\begingroup
\setlength{\lfA}{0.085\linewidth}
\setlength{\lfB}{0.105\linewidth}
\setlength{\lfC}{0.140\linewidth}
\setlength{\lfD}{0.310\linewidth}
\setlength{\lfE}{0.275\linewidth}
\footnotesize
\setlength{\tabcolsep}{3pt}
\renewcommand{\arraystretch}{1.1}
\setlength{\LTcapwidth}{\linewidth}

\begin{longtable}{L{\lfA} C{\lfB} L{\lfC} L{\lfD} L{\lfE}}

\caption{Hyperparameters of coreset selection methods that are label-free by design or that can be adapted into a label-free variant.
For every method that operates on embeddings, we regard the feature-extraction step as label-free, since applying a pathology FM to obtain embeddings requires no annotations.
Hyperparameter symbols are local to each method and follow the original publications, except where a symbol would collide with the notation of this paper (Table~\ref{tab:notation}).}
\label{tab:label_free_hyperparams} \\
\toprule
\textbf{Method} &
\textbf{Label-free}\newline\textbf{by design} &
\textbf{Label-free}\newline\textbf{adaptation} &
\textbf{Hyperparameters of the}\newline\textbf{label-free method} &
\textbf{Reported values and}\newline\textbf{how they were obtained} \\
\midrule
\endfirsthead

\multicolumn{5}{l}{\itshape Table \thetable\ (continued)} \\
\toprule
\textbf{Method} &
\textbf{Label-free}\newline\textbf{by design} &
\textbf{Label-free}\newline\textbf{adaptation} &
\textbf{Hyperparameters of the}\newline\textbf{label-free method} &
\textbf{Reported values and}\newline\textbf{how they were obtained} \\
\midrule
\endhead

\midrule
\multicolumn{5}{r}{\itshape Continued on the next page} \\
\endfoot

\bottomrule
\endlastfoot

Random & Yes & --- &
$s$: random seed. & --- \\
\midrule

K-center\newline~\citep{sener2017active} & Yes & --- &
$s$: random seed. & --- \\
\midrule

K-means\newline~\citep{macqueen1967multivariate} & Yes & --- &
$s$: random seed;\newline
$M$: number of clusters. & --- \\
\midrule

Prototype\newline~\citep{sorscher2022beyond} & Yes & --- &
$M$: number of clusters;\newline
$\mathrm{mode} \in \{\textit{abundant}, \textit{scarce}\}$: pruning direction. Hard (least prototypical) samples are preferred when the retained set is large (\textit{abundant}), and easy (prototypical) samples when it is small (\textit{scarce}). &
$M = 1{,}000$ (ImageNet). The crossover point between the two modes is derived by solving saddle-point equations that depend on the student--teacher overlap and on the margin distribution along the teacher, both defined with respect to a labeled decision boundary. It is therefore not identifiable without labels, so we treat the pruning direction as a hyperparameter. \\
\midrule

Moderate\newline~\citep{xia2022moderate} & No & Class centers are replaced by K-means centroids. &
$M$: number of clusters. & --- \\
\midrule

CCS\newline~\citep{zheng2022coverage} & No & The label-dependent difficulty score is replaced by the prototype score. &
$M$: number of clusters used to compute the prototype score;\newline
$\beta$: hard-cutoff rate, i.e.\ the fraction of the highest-difficulty samples discarded before stratification;\newline
$B$: number of strata;\newline
$s$: random seed for the uniform draws within each stratum. &
$B = 50$ for all datasets and pruning rates (CIFAR-10); $\beta$ obtained by grid search with a step size of 0.1, separately for every dataset and every pruning rate. \\
\midrule

\multirow{2}{\lfA}{D\textsuperscript{2}-Pruning\newline~\citep{maharana2024mathbb}} &
\multirow{2}{\lfB}{\centering No} &
\textit{Variant 1}: all nodes are initialized with a uniform difficulty score. &
$k_{\mathrm{nn}}$: number of nearest neighbors in the graph;\newline
$\gamma_f$: forward message weight;\newline
$\gamma_r$: reverse message weight. &
$\gamma_f$ fixed to 1.0; $k_{\mathrm{nn}} \in \{1, 5, 10, 15\}$ and $\gamma_r \in \{0, 0.1, \dots, 1.0\}$ were swept on CIFAR-10 and CIFAR-100, and the resulting trends used to select three configurations for ImageNet-1K. \\
\cmidrule(lr){3-4}
& &
\textit{Variant 2}: nodes are initialized with the prototype score. &
$M$: number of clusters used to compute the prototype score;\newline
$k_{\mathrm{nn}}$: number of nearest neighbors in the graph;\newline
$\gamma_f$: forward message weight;\newline
$\gamma_r$: reverse message weight. &
For CIFAR-10 at a coreset ratio of 0.3: $k_{\mathrm{nn}} = 10$, $\gamma_r = 0.9$. \\
\midrule

\multirow{2}{\lfA}{ELFS\newline~\citep{zheng2025elfs}} &
\multirow{2}{\lfB}{\centering Yes} &
\textit{Variant 1}: identical to the original method; the difficulty score is the AUM score obtained from pseudo-labels. &
$\tilde{C}$: number of pseudo-classes used in deep clustering;\newline
training configuration of the model trained on pseudo-labels to compute AUM scores (epochs, batch size, learning rate, optimizer, weight decay);\newline
$\beta$: hard-cutoff rate, i.e.\ the fraction of the highest-difficulty samples discarded before stratification. &
$\tilde{C}$ is set to the number of classes $C$ in the dataset; 200 epochs, batch size 512, AdamW, learning rate $10^{-4}$, weight decay $10^{-4}$ (CIFAR-10, CIFAR-100, STL-10, ImageNet); $\beta$ obtained by grid search with a step size of 0.1 on a pseudo-label validation split, separately for every dataset and every pruning rate. \\
\cmidrule(lr){3-5}
& &
\textit{Variant 2}: the AUM score is replaced by the prototype score. &
$M$: number of clusters used to compute the prototype score;\newline
$\beta$: hard-cutoff rate. & ---
 \\
\midrule

\multirow{2}{\lfA}{InfoMax\newline~\citep{tan2025data}} &
\multirow{2}{\lfB}{\centering Yes} &
\textit{Variant 1}: the prototype score in \textit{abundant} mode (hard samples carry high intra-sample information). &
\vspan{\lfD}{%
$\alpha$: pairwise weight balancing the first- and second-order terms in Eq.~(2) of the InfoMax paper;\newline
$k_{\mathrm{nn}}$: sparsity rate, i.e.\ the neighborhood size of each sample in the similarity matrix;\newline
$N_{\mathrm{part}}$: size of each partition subset;\newline
$n_{\mathrm{iter}}$: number of iterations.} &
\vspan{\lfE}{%
$N_{\mathrm{part}} \geq 10^{6}$ samples (CC3M and CC12M); $k_{\mathrm{nn}} = 5$, $\alpha = 0.3$, $n_{\mathrm{iter}} = 20$ (CC12M).} \\
\cmidrule(lr){3-3}
& &
\textit{Variant 2}: the prototype score in \textit{scarce} mode (easy samples carry high intra-sample information). & & \\
\midrule

ZCore\newline~\citep{griffin2026zero} & Yes & --- &
$s$: random seed (selection of the sampled subspaces and initialization of the per-sample importance score);\newline
$D_{\mathrm{sub}}$: dimensionality of the sampled subspaces;\newline
$\alpha$: number of nearest neighbors penalized as redundant;\newline
$\beta$: exponent of the distance penalty;\newline
$n_{\mathrm{iter}}$: number of sampling iterations. &
$D_{\mathrm{sub}} = 2$, $\alpha = 1{,}000$, $\beta = 4$, $n_{\mathrm{iter}} \approx 10^{6}$ (CIFAR-100). \\

\end{longtable}
\endgroup

\newpage

\section{Proof}
\label{sec:guarantee}
In this section, we show that the objective in Eq.~\ref{eq:objective} is monotone submodular, so that the greedy algorithm attains a $(1-1/e)$ approximation of the optimal coreset \citep{nemhauser1978analysis}. This provides a dataset-independent, a priori lower bound on the coverage of the selected coreset. To the best of our knowledge, the baselines generally do not provide a proof of such a lower-bound guarantee.

\subsection{$\mathrm{GC}$ is monotone and submodular}
For each ordered pair $(x_i,x_j)$, define the nonnegative similarity $\mathrm{sim}(x_i,x_j)=r_{\max}-\lVert \mathbf{e}_i-\mathbf{e}_j\rVert_1\geq0$. For $\mathcal{T}\subseteq \mathcal{S}$, let the per-sample coverage be $c_i(\mathcal{T})=\max_{x_j\in \mathcal{T}}\mathrm{sim}(x_i,x_j)$ with $c_i(\varnothing)=0$, consistent with $d_i(\varnothing)=r_{\max}$, and the total coverage
$F(\mathcal{T})=\sum_{i=1}^{N}c_i(\mathcal{T})$. Since $c_i(\mathcal{T})=r_{\max}-d_i(\mathcal{T})$, we have $F(\mathcal{T})=Nr_{\max}\bigl(1-\bar{d}(\mathcal{T})/r_{\max}\bigr)$, and by Eq.~\ref{eq:equiv},
\begin{equation}
    \mathrm{GC}(\mathcal{T}) \;=\; \frac{F(\mathcal{T})}{N\,r_{\max}} .
    \label{eq:aucF}
\end{equation}
As $Nr_{\max}$ is a positive constant independent of $\mathcal{T}$, $\mathrm{GC}$ and $F$ share the same monotonicity, submodularity, and maximizers.

\begin{lemma}\label{lem:submod}
$F$, and hence $\mathrm{GC}$, is normalized, monotone, and submodular.
\end{lemma}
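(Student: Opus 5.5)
The plan is to prove the three properties for each per-sample term $c_i$ separately and then sum. Normalization, monotonicity and submodularity are all preserved under nonnegative sums, so it suffices to treat a single $c_i$. The statement about $\mathrm{GC}$ then follows immediately from Eq.~\ref{eq:aucF}, because $\mathrm{GC}$ is $F$ multiplied by the positive constant $1/(N r_{\max})$.

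\emph{Normalization.} By convention $c_i(\varnothing)=0$ for every $i$, so $F(\varnothing)=0$.

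\emph{Monotonicity.} Take $\mathcal{A}\subseteq\mathcal{B}$. If $\mathcal{A}=\varnothing$, then $c_i(\mathcal{B})\ge 0=c_i(\mathcal{A})$. This is exactly where $\mathrm{sim}\ge 0$ is needed, and it holds because every pairwise distance is at most $r_{\max}$. If $\mathcal{A}\neq\varnothing$, a maximum over a larger set can only be larger, so $c_i(\mathcal{B})\ge c_i(\mathcal{A})$.

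\emph{Submodularity.} The goal is to show that for $\mathcal{A}\subseteq\mathcal{B}$ and $x_j\notin\mathcal{B}$,
\[
c_i(\mathcal{A}\cup\{x_j\})-c_i(\mathcal{A}) \;\ge\; c_i(\mathcal{B}\cup\{x_j\})-c_i(\mathcal{B}).
\]
The key identity is $c_i(\mathcal{T}\cup\{x_j\})=\max\bigl(c_i(\mathcal{T}),s\bigr)$ with $s=\mathrm{sim}(x_i,x_j)$. It holds for every $\mathcal{T}$, including $\mathcal{T}=\varnothing$, because $s\ge 0=c_i(\varnothing)$. This gives
\[
c_i(\mathcal{T}\cup\{x_j\})-c_i(\mathcal{T})=\bigl(s-c_i(\mathcal{T})\bigr)_{+}.
\]
This quantity is the per-sample term of the gain in Eq.~\ref{eq:gain}, rescaled by $1/N$. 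The map $u\mapsto(s-u)_+$ is non-increasing, and $c_i(\mathcal{A})\le c_i(\mathcal{B})$ by monotonicity. Hence the marginal gain over $\mathcal{A}$ is at least the marginal gain over $\mathcal{B}$. Summing over $i$ gives the diminishing-returns inequality for $F$.

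The only delicate point is the empty set. Without the convention $c_i(\varnothing)=0\le\mathrm{sim}$, the max-identity and monotonicity could fail: if similarities could be negative, adding a first element could decrease $c_i$. The choice of $r_{\max}$ as the offset, which the paper sets up precisely so that $\mathrm{sim}\ge 0$, resolves this. I would state the $\varnothing$ case explicitly in each step rather than leave it implicit.

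Finally, none of these steps uses any property of the $\ell_1$ norm beyond distances being bounded by $r_{\max}$. The lemma therefore holds for any distance function, consistent with the remark in the ablation on distance metrics.
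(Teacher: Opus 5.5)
Your proof is correct and follows the paper's argument essentially step for step. Normalization comes from $c_i(\varnothing)=0$, monotonicity from each $c_i$ being a maximum over a larger set, and submodularity from the per-sample gain $\bigl(\mathrm{sim}(x_i,x_j)-c_i(\mathcal{T})\bigr)_{+}$ being non-increasing in $c_i(\mathcal{T})$, followed by summing over $i$ and rescaling by $1/(N r_{\max})$. Your explicit handling of $\mathcal{T}=\varnothing$ via $\mathrm{sim}\ge 0$ makes precise a point that the paper leaves implicit, both in its monotonicity step and in its marginal-gain identity.
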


\begin{proof}
$F(\varnothing)=0$ gives normalization. For $\mathcal{A}\subseteq \mathcal{B}\subseteq \mathcal{S}$, each $c_i$ is a maximum over a larger set, so $c_i(\mathcal{A})\leq c_i(\mathcal{B})$ and $F(\mathcal{A})\leq F(\mathcal{B})$, giving monotonicity. For
submodularity, the per-sample marginal gain of any $x\in \mathcal{S}\setminus \mathcal{B}$ is
$c_i(\mathcal{T}\cup\{x\})-c_i(\mathcal{T})=\bigl(\mathrm{sim}(x_i,x)-c_i(\mathcal{T})\bigr)_{+}$. Since $c_i(\mathcal{A})\leq c_i(\mathcal{B})$ and $a\mapsto(a)_{+}$ is nondecreasing,
$\bigl(\mathrm{sim}(x_i,x)-c_i(\mathcal{A})\bigr)_{+}\geq\bigl(\mathrm{sim}(x_i,x)-c_i(\mathcal{B})\bigr)_{+}$; summing over $i$ yields $F(\mathcal{A}\cup\{x\})-F(\mathcal{A})\geq F(\mathcal{B}\cup\{x\})-F(\mathcal{B})$. All three properties transfer to $\mathrm{GC}$ under scaling by the positive constant $1/(Nr_{\max})$.
\end{proof}

\subsection{Approximation guarantee}
By \citep{nemhauser1978analysis}, for a normalized monotone submodular set function $h$, the greedy selection $\mathcal{T}_{h}$ of size $k$ satisfies $h(\mathcal{T}_{h})\geq(1-1/e)\max_{|\mathcal{T}|\leq k}h(\mathcal{T})$. Algorithm~\ref{alg:coreset_selection} is exactly this procedure applied to $\mathrm{GC}$, so Lemma~\ref{lem:submod} gives
\begin{equation}
    \mathrm{GC}(\mathcal{S}') \;\geq\; \bigl(1-\tfrac{1}{e}\bigr)\,
    \mathrm{GC}(\mathcal{S}^{\star}) \;\approx\; 0.632\,\mathrm{GC}(\mathcal{S}^{\star}),
    \label{eq:guarantee}
\end{equation}
where $\mathcal{S}'$ is the coreset returned by Algorithm~\ref{alg:coreset_selection} and $\mathcal{S}^{\star}$ is the optimal coreset of Eq.~\ref{eq:objective}. The selected coreset thus attains at least ${\approx}63.2\%$ of the optimal coverage, independently of the dataset or embedding geometry. As the guarantee holds at every cardinality, each intermediate coreset ($t\leq k$) is likewise $(1-1/e)$-optimal, so a single run certifies the selection at every budget. Pathologists can therefore annotate the selected samples in the order they were chosen and stop whenever the budget is exhausted, with the guarantee holding at every stopping point.

\newpage

\section{Datasets}
\label{Datasets}

\subsection{Summary}
\label{dataset_summary}
\vspace*{0.5cm}
\begin{table}[htbp]
\centering
\caption{Overview of the datasets used for evaluation. Sample counts are given as W (whole-slide images), T (tiles) or P (patients).}
\label{tab:dataset_overview}
\footnotesize
\setlength{\tabcolsep}{4pt}
\renewcommand{\arraystretch}{1.1}
\begin{tabular}{@{}lllllll@{}}
\toprule
\textbf{Task} & \textbf{Dataset} & \textbf{Tissue} & \textbf{Label} &
\makecell[l]{\textbf{Dataset}\\\textbf{size}} &
\makecell[l]{\textbf{Train}\\\textbf{set}} &
\makecell[l]{\textbf{Test}\\\textbf{set}} \\
\midrule
\multirow{4}{2cm}{WSI classification}
 & PANDA     & Prostate & ISUP score & 10,616 W & 8,492 W & 2,124 W \\
 & IMP-CRS   & Colon    & \makecell[l]{non-neoplastic,\\ low-grade, high-grade}
             & 5,333 W & 4,267 W & 1,066 W \\
 & TCGA-BRCA & Breast   & IDC, ILC & 949 P & 759 P & 190 P \\
\midrule
\multirow{7}{2cm}{Tile classification}
 & WSSS4LUAD & Lung     & \makecell[l]{tumor epithelium,\\ tumor-associated stroma,\\ normal tissue}
             & 10,091 T & 8,054 T & 2,037 T \\
 & SICAP     & Prostate & NC, GG3, GG4, GG5 & 12,081 T & 9,959 T & 2,122 T \\
 & CRC-100k  & Colon    & \makecell[l]{ADI, BACK, DEB, LYM,\\ MUC, MUS, NORM,\\ STR, TUM}
             & 107,180 T & 100,000 T & 7,180 T \\
 & DeepHP    & Stomach  & HPP, HPN & 394,926 T & 315,941 T & 78,985 T \\
\midrule
\multirow{6}{*}{Segmentation}
 & BEETLE     & Breast & \makecell[l]{invasive epithelium,\\ non-invasive epithelium,\\ necrosis, other}
             & 400,941 T & 340,200 T & 60,741 T \\
 & panCK      & \makecell[l]{10 tissue\\ types} & epithelium, other
             & 26,509 T & 21,912 T & 4,597 T \\
 & CAMELYON16 & Breast & \makecell[l]{metastatic tissue,\\ normal tissue, background}
             & 479,772 T & 365,112 T & 114,660 T \\
\bottomrule
\end{tabular}
\end{table}
\vspace*{0.5cm}

Table~\ref{tab:dataset_overview} summarizes the 10 datasets used in our experiments. Coresets are selected from the full training set of each dataset, which ranges in size from 759 patients to 365{,}112 tiles. Because a fixed retention rate translates into very different annotation workloads across this range, we define a separate set of four low retention rates per dataset, chosen so that every coreset comprises tens to several hundred samples, an amount that a pathologist could realistically annotate in clinical
practice. 

For WSI classification, we retain $\{0.5, 1, 2, 5\}\%$ of PANDA, $\{1, 2, 5, 10\}\%$ of IMP-CRS and $\{5, 10, 15, 20\}\%$ of TCGA-BRCA. For tile classification, we retain $\{0.5, 1, 2, 5\}\%$ of WSSS4LUAD, $\{0.5, 1, 2, 5\}\%$ of SICAP, $\{0.02, 0.05, 0.1, 0.2\}\%$ of CRC-100k and $\{0.01, 0.05, 0.1, 0.2\}\%$ of DeepHP. For segmentation, we retain $\{0.02, 0.05, 0.1, 0.2\}\%$ of BEETLE, $\{0.2, 0.5, 1, 2\}\%$ of panCK and $\{0.02, 0.05, 0.1, 0.2\}\%$ of CAMELYON16.

Training and test sets are split at the patient level when patient identifiers are available and at the slide level otherwise. When an official split is provided, we adopt the released split.

\subsection{Descriptions}

\paragraph{PANDA}~\citep{bulten2022artificial} consists of 10,616 H\&E formalin-fixed paraffin-embedded (FFPE) diagnostic histopathology WSIs of core-needle biopsies of prostate cancer from the Radboud University Medical Center and the Karolinska Institute, digitized at 20$\times$ magnification. Each slide is assigned an International Society of Urological Pathology (ISUP) score defining the prostate cancer grade (six-class grading task). We partitioned PANDA into label-stratified training and test sets at a ratio of 80:20, yielding 8,492 WSIs (grade 0, 2,313; grade 1, 2,133; grade 2, 1,074; grade 3, 994; grade 4, 999; grade 5, 979) in the full training set and 2,124 WSIs (grade 0, 579; grade 1, 533; grade 2, 269; grade 3, 248; grade 4, 250; grade 5, 245) in the test set.

\paragraph{IMP-CRS}~\citep{oliveira2021cad, neto2022imil4path, neto2024interpretable} contains 5,333 colorectal biopsy and polypectomy slides, retrieved from the data archive of IMP Diagnostics laboratory, Portugal, digitized at 40$\times$ magnification. Each case is assigned to one of three classes: non-neoplastic; low-grade lesions, comprising conventional adenomas with low-grade dysplasia; and high-grade lesions, comprising conventional adenomas with high-grade dysplasia, intramucosal carcinomas and invasive adenocarcinomas. We performed a label-stratified 80:20 split of the dataset into training and test subsets, yielding 4,267 WSIs (678:2,278:1,311) in the full training set and 1,066 WSIs (169:569:328) in the test set, following the non-neoplastic:low-grade:high-grade class order.

\paragraph{TCGA-BRCA} comprises H\&E FFPE diagnostic histopathology WSIs of breast invasive carcinoma from The Cancer Genome Atlas (TCGA), primarily scanned at 20$\times$ magnification. Slides are assigned to one of two classes: invasive ductal carcinoma (IDC) and invasive lobular carcinoma (ILC). After excluding patients with missing slides or other cancer types, we retained WSIs from 949 patients (IDC, 758; ILC, 191), of whom 892 were represented by a single slide and 57 by multiple. We partitioned the cohort at the patient level into label-stratified training and test subsets at a ratio of 80:20, yielding 759 patients (IDC, 606; ILC, 153) in the full training set and 190 patients (IDC, 152; ILC, 38) in the test set. For each patient, we used the mean of all available slide embeddings as the patient-level embedding.

\paragraph{WSSS4LUAD}~\citep{han2022wsss4luad} consists of lung adenocarcinoma (LUAD) image tiles from Guangdong Provincial People's Hospital (GDPH) and TCGA, all scanned at 40$\times$ magnification. Each tile carries a three-element binary label vector indicating the presence of tumor epithelium, tumor-associated stroma and normal tissue. As tile-level labels are not released for the official validation and test sets, we constructed our dataset from the original training set alone, comprising 10,091 tiles from 63 WSIs, with tile dimensions ranging from $150 \times 150$ to $300 \times 300$ pixels. We partitioned the data at the WSI level and stratified by center of origin; the split was chosen to yield an approximate 80:20 ratio of tiles, giving 8,054 tiles from 53 WSIs (GDPH, 41; TCGA, 12) in the full training set and 2,037 tiles from 10 WSIs (GDPH, 8; TCGA, 2) in the test set.

\paragraph{SICAP}~\citep{silva2020going} comprises
$512 \times 512$ pixel tiles extracted from 155 WSIs of prostate core-needle biopsies, digitized at $10\times$ magnification. Each tile is annotated with one of four Gleason grades, non-cancerous (NC), Gleason grade 3 (GG3), Gleason grade 4 (GG4), and Gleason grade 5 (GG5), yielding a four-class grading task. We adopt the official split, which partitions the dataset into 9,959 tiles from 124 WSIs for training and 2,122 tiles from 31 WSIs for testing.

\paragraph{CRC-100k}~\citep{kather2018100} consists of $224 \times 224$ pixel tiles extracted from H\&E-stained histological images of human colorectal cancer (CRC) and normal tissue, digitized at $20\times$ magnification. Each tile is assigned to one of nine classes: adipose tissue (ADI), background (BACK), debris (DEB), lymphocytes (LYM), mucus (MUC), smooth muscle (MUS), normal colon mucosa (NORM), cancer-associated stroma (STR) and colorectal adenocarcinoma epithelium (TUM). We adopted the official splits, comprising 100,000 tiles for training and 7,180 tiles for testing.

\paragraph{DeepHP}~\citep{goncalves2022deephp, goncalves2023deephp}  comprises 394,926 tiles of $224 \times 224$ pixels, extracted from gastric biopsies digitized at $20\times$ magnification, of which 111,005 are labeled Helicobacter pylori-positive (HPP) and 283,921 Helicobacter pylori-negative (HPN). We partitioned the tiles into label-stratified training and test subsets at a ratio of 80:20, yielding 315,941 tiles (HPP, 88,804; HPN, 227,137) in the full training set and 78,985 tiles (HPP, 22,201; HPN, 56,784) in the test set.

\paragraph{BEETLE}~\citep{lems2025multicentric} is a multicenter, multiscanner dataset of breast cancer biopsies and surgical resections, spanning both major histological subtypes of invasive breast cancer, all molecular subtypes and all histological grades, provided at $20\times$ magnification. From the official development set, we selected 396 WSIs from patients with pixel-level annotations of four classes: invasive epithelium, non-invasive epithelium, necrosis and other. We partitioned these WSIs at the patient level and stratified by center of origin. From each WSI we then extracted non-overlapping $224 \times 224$ pixel tiles, retaining only those in which more than 50\% of pixels were annotated. This yielded 340,200 tiles from 317 WSIs (Netherlands Cancer Institute (NKI), 90; Radboud University Medical Center (RUMC), 169; Santa Chiara Hospital (SCH), 44; TCGA, 14) in the full training set and 60,741 tiles from 79 WSIs (NKI, 23; RUMC, 42; SCH, 11; TCGA, 3) in the test set.

\paragraph{panCK}~\citep{komura2023restaining} consists of 26,509 tiles of $984 \times 984$ pixels at $40\times$ magnification, collected from FFPE tissue of 407 patients and covering 10 tissue types and 16 tumor types. Each tile is provided with pixel-level annotations of two classes, epithelium and other, derived
from immunofluorescence restaining. We retained the official training split and merged the official validation and test sets into a single test set, giving 21,912 tiles in the full training set and 4,597 tiles in the test set.

\paragraph{CAMELYON16}~\citep{ehteshami2017diagnostic} contains H\&E-stained WSIs of sentinel lymph node sections from breast cancer patients at Radboud University Medical Center and University Medical Center Utrecht, digitized at $40\times$ magnification. We constructed our dataset from the 158 WSIs containing metastasis, each of which carries pixel-level annotations of three classes: background, normal tissue and metastatic tissue. We partitioned these at the slide level at a ratio of 80:20, then extracted non-overlapping tiles of $512 \times 512$ pixels at $20\times$ magnification, retaining only those with more than 50\% non-background area. This yielded 365,112 tiles from 126 WSIs in the full training set and 114,660 tiles from 32 WSIs in the test set.

\newpage
\section{Implementation details}

\subsection{Hyperparameter settings of baselines}
\label{app:baseline_hparams}

\paragraph{Why natural-image hyperparameters are not used directly.}
All baselines except Random and K-center expose hyperparameters whose default values or robust ranges were determined on natural-image benchmarks. As illustrated in Section~\ref{sec:intro}, neither transfers reliably to histopathology. First, a range reported as robust on natural images can produce substantial variation on histopathology data: InfoMax~\citep{tan2025data} was reported to be robust to $\alpha \in [0.3, 4]$ on CC12M, yet on PANDA at a 2\% retention rate, $\alpha = 0.3, 1, 2, 3, 4$ yields QWK values of 0.60, 0.58, 0.70, 0.66 and 0.69, respectively. Second, the prescribed default can be among the worst choices: ELFS~\citep{zheng2025elfs} sets the number of pseudo-classes $\tilde{C}$ to the number of classes $C$, but on TCGA-BRCA ($C = 2$) at a 5\% retention rate, $\tilde{C} = 2, 4, 10, 15, 20$ yields balanced accuracies of 0.71, 0.85, 0.84, 0.83 and 0.83, with the default performing worst. Third, the default may be undefined: for segmentation on BEETLE, tiles do not carry a single class label, and $\tilde{C} = 4, 10, 15, 20, 25$ at a 0.02\% retention rate yields test weighted Dice values of 0.81, 0.84, 0.70, 0.84 and 0.78, with no monotonic trend.

\paragraph{Sweep protocol.}
Tuning these hyperparameters for each cohort would require additional labels, which conflicts with the purpose of coreset selection; we therefore assume that no labels are available for tuning. For each baseline, rather than selecting the best setting, we sweep a characteristic hyperparameter over five values and report the mean test performance across the five resulting coresets, together with its SD. The five values are sampled within the range reported to be robust on natural-image benchmarks, where such a range is given, and otherwise around the reported default value, including the default itself. All other hyperparameters are kept at their default settings. For Random, K-center, and ZCore, we sweep the random seed. The mean thus indicates the expected performance of a baseline on a new cohort without labels for tuning, while the SD of the test performance across the five settings indicates the sensitivity to the swept hyperparameter. GCcore is deterministic and hyperparameter-free and therefore has no SD.

\paragraph{Per-method settings.}
Random, K-center~\citep{sener2017active} and ZCore~\citep{griffin2026zero} are swept over the random seeds $\{17, 83, 146, 271, 394\}$. 
K-means, Proto-Hard, Proto-Easy, Moderate and ELFS-Proto are swept over the number of clusters $M$, and ELFS-Pseudo over the number of pseudo-classes $\tilde{C}$. Following the original works, the default is the number of classes $C$ of the dataset, so the five values are dataset-specific. The values for each dataset are listed in Table~\ref{tab:hparam_values}.
CCS~\citep{zheng2022coverage} is swept over the hard-cutoff rate $\beta$. The original work selects $\beta$ by grid search for each dataset and pruning rate, which requires labels; the reported optimal values lie between 0 and 0.5 and increase with the pruning rate, ranging from 0.2 to 0.5 at pruning rates of 90\% and above. Since a majority of our experiments operate at pruning rates above 90\%, we use $\beta \in \{0.1, 0.2, 0.3, 0.4, 0.5\}$.
D\textsuperscript{2}-Uniform and D\textsuperscript{2}-Proto~\citep{maharana2024mathbb} are swept over the number of nearest neighbors $k_{\mathrm{nn}} \in \{1, 5, 10, 15, 20\}$.
InfoMax-Hard and InfoMax-Easy~\citep{tan2025data} are swept over $\alpha \in \{0.3, 1, 2, 3, 4\}$, sampled within the range reported as robust.

\begin{table}[h]
\centering
\caption{Swept hyperparameter and its five values for each baseline. (a)~Hyperparameters swept for each baseline. (b)~Dataset-specific values of the number of clusters $M$ and the number of pseudo-classes $\tilde{C}$.}
\label{tab:hparam_values}
\footnotesize
\renewcommand{\arraystretch}{1.15}

\textbf{(a)} Swept hyperparameters\\[2pt]
\begin{tabularx}{0.8\linewidth}{@{}>{\raggedright\arraybackslash}X >{\raggedright\arraybackslash}p{3.4cm} >{\raggedright\arraybackslash}p{3.6cm}@{}}
\toprule
\textbf{Baseline} & \textbf{Swept hyperparameter} & \textbf{Values} \\
\midrule
Random, K-center, ZCore & Random seed $s$ & $\{17, 83, 146, 271, 394\}$ \\
K-means, Proto-Hard, Proto-Easy, Moderate, ELFS-Proto & Number of clusters $M$ & Dataset-specific, see (b) \\
ELFS-Pseudo & Number of pseudo-classes $\tilde{C}$ & Dataset-specific, see (b) \\
CCS & Hard-cutoff rate $\beta$ & $\{0.1, 0.2, 0.3, 0.4, 0.5\}$ \\
D\textsuperscript{2}-Uniform, D\textsuperscript{2}-Proto & Number of nearest neighbors $k_{\mathrm{nn}}$ & $\{1, 5, 10, 15, 20\}$ \\
InfoMax-Hard, InfoMax-Easy & Pairwise weight $\alpha$ & $\{0.3, 1, 2, 3, 4\}$ \\
\bottomrule
\end{tabularx}

\vspace{8pt}
\textbf{(b)} Dataset-specific values of $M$ and $\tilde{C}$\\[2pt]
\begin{tabularx}{0.8\linewidth}{@{}>{\raggedright\arraybackslash}p{2.6cm} >{\raggedright\arraybackslash}X c >{\raggedright\arraybackslash}p{3.6cm}@{}}
\toprule
\textbf{Task} & \textbf{Dataset} & $\boldsymbol{C}$ & \textbf{Values of $M$ and $\tilde{C}$} \\
\midrule
\multirow{3}{*}{WSI classification}
 & PANDA      & 6 & $\{6, 10, 15, 20, 25\}$ \\
 & IMP-CRS    & 3 & $\{3, 4, 10, 15, 20\}$ \\
 & TCGA-BRCA  & 2 & $\{2, 4, 10, 15, 20\}$ \\
\midrule
\multirow{4}{*}{Tile classification}
 & WSSS4LUAD  & 3 & $\{3, 4, 10, 15, 20\}$ \\
 & SICAP      & 4 & $\{4, 10, 15, 20, 25\}$ \\
 & CRC-100k   & 9 & $\{9, 10, 15, 20, 25\}$ \\
 & DeepHP     & 2 & $\{2, 4, 10, 15, 20\}$ \\
\midrule
\multirow{3}{*}{Segmentation}
 & BEETLE     & 4 & $\{4, 10, 15, 20, 25\}$ \\
 & panCK      & 2 & $\{2, 4, 10, 15, 20\}$ \\
 & CAMELYON16 & 3 & $\{3, 4, 10, 15, 20\}$ \\
\bottomrule
\end{tabularx}
\end{table}


\subsection{Software and hardware specifications}
We implemented our method in PyTorch~\citep{paszke2017automatic}; experiments were conducted across two servers, one with two NVIDIA RTX~A6000 GPUs (48~GB each) and one with three NVIDIA RTX~PRO~6000 Blackwell GPUs (96~GB each). FM embeddings are extracted once and cached, so every selection run and every downstream head reuses the same features.

For all model training, only the head parameters are optimized, while the encoder remains frozen throughout.
All heads are trained using AdamW with a learning rate of $10^{-3}$ and weight decay of $10^{-4}$, with batch sizes of 256 for classification heads and 32 for segmentation heads. When the subset size $k$ is smaller than the batch size, minibatches are sampled from the subset with replacement, ensuring that each gradient step uses a full batch.
Following~\citep{zheng2022coverage, maharana2024mathbb, tan2025data}, the number of epochs is set per run so that the number of iterations on the selected subset matches that of the corresponding full-set run. Identical settings are used across all servers, selection strategies and retention rates. To mitigate the influence of the training procedure on test performance, the task-specific head is trained five times on each coreset, whether selected by GCcore or by a single configuration (hyperparameter value) of a baseline, using five different training seeds that govern head initialization and minibatch ordering. The mean test performance across these five runs is reported.

\subsection{Segmentation head architecture}
\label{seg_head}
The segmentation head operates on the spatial token grid of the frozen encoder and is the only trainable component of the model. To obtain this grid, embeddings are extracted from the final transformer block of the frozen tile encoder, and the prefix tokens ([CLS] and any register tokens) are discarded to retain only the patch tokens carrying spatial information. The retained tokens are then reshaped from a flattened sequence into a two-dimensional grid, preserving spatial correspondence with the original image locations, so that each token encodes features from a distinct non-overlapping region of the input. Throughout, let $D$ denote the encoder embedding dimensionality, $g \times g$ the token grid, and $R$ the working resolution at which the loss and the evaluation metrics are computed; the input to the head is therefore a feature map of size $D \times g \times g$.

The head begins with a $1\times1$ convolutional projection that preserves the channel width at $D$, followed by batch normalization and a GELU activation. Four successive transposed-convolutional blocks (kernel size 4, stride 2, padding 1) then upsample the spatial resolution by a factor of two at each stage while halving the channel dimensionality, each block again followed by batch normalization and GELU. A final $1\times1$ convolution projects the resulting feature maps onto the $C$ segmentation classes, yielding pixel-wise logits at $16\times$ the token-grid resolution. For H-optimus-1, a ViT-g/14 at an input resolution of $224 \times 224$, the 257 output tokens of the final block reduce to 256 patch tokens once the [CLS] token is discarded, giving $g = 16$ and $D = 1536$; the decoder therefore takes the feature map from $16\times16$ to $32\times32$, $64\times64$, $128\times128$ and finally $256\times256$, with the channel width reducing from 1536 to 768, 384, 192 and 96.

Because both the token grid size $g$ and the working resolution $R$ may vary across encoders and datasets, the logits are bilinearly resampled from $16g \times 16g$ to $R \times R$; this is the identity when $16g = R$ and a mild resampling otherwise. We set $R = 224$ for BEETLE, panCK and CAMELYON16, so the $256\times256$ logit map is slightly reduced in each case. Masks are resized to $R \times R$ by nearest-neighbor interpolation, and patches are resized to the encoder's native input resolution without cropping, so that image and label remain in spatial correspondence.

\newpage
\section{Further results}
\label{further_results}

\subsection{GCcore is deterministic}
\label{GC_deter}
Fig.~\ref{fig:combined_box} shows that the performance of the 14 baseline methods is sensitive to their respective characteristic hyperparameters. Varying each hyperparameter across five values leads to a SD of up to 0.507 in test performance (Proto-Easy on IMP-CRS at 10\% retention). Across the 40 dataset--retention-rate combinations (10 datasets $\times$ four low retention rates), Proto-Easy is the most sensitive method, with a mean range of 0.231 (SD $=$ 0.099), while the least sensitive, Random, still has a mean range of 0.051 (SD $=$ 0.021).

GCcore exceeds the mean of every baseline in 26 of the 40 combinations. Among the remaining 14 combinations, for 73\% of the 55 baseline configurations whose mean performance exceeds that of GCcore, the advantage rests on only one or two settings. For example, at 0.5\% retention on PANDA, K-means achieves QWK values of 0.723, 0.717, 0.738, 0.792, and 0.713 for 6, 10, 15, 20, and 25 clusters, respectively. The resulting mean of 0.737 is higher than the GCcore value of 0.733; however, K-means exceeds GCcore at only two of the five settings. Similarly, at 0.5\% retention on SICAP, CCS achieves 0.580, 0.713, 0.626, 0.762, and 0.677 across its five settings. Although its mean of 0.672 is 0.004 higher than the GCcore value of 0.668, the range across settings is 0.182, approximately 48 times this difference.

Importantly, identifying the most favorable hyperparameter setting requires evaluating the candidate coresets on labeled data, the need for which coreset selection is intended to reduce. Consequently, the advantage of a baseline at its favorable settings cannot be reliably obtained in a practical, label-efficient scenario. In contrast, GCcore requires no hyperparameter tuning and achieves higher overall performance, making it directly applicable in practice.

\subsection{GCcore is robust to the choice of feature extractor}
\label{ablation_FMchoice}
Figure~\ref{fig:ablation_FMchoice_PANDA} and Figure~\ref{fig:ablation_FMchoice_BEETLE}
disentangle two factors that are usually confounded in coreset evaluations: how well a selection method performs, and how much of that performance is inherited from the feature extractor on which it is built. The full-dataset panels establish the ceiling of the three slide encoders (PRISM~\citep{shaikovski2024prism} 0.856 $>$ TITAN~\citep{ding2025multimodal} 0.833 $>$ MADELEINE~\citep{jaume2024multistain} 0.748) and
the three tile encoders (H-optimus-1~\citep{scalbert2026hoptimus1} 0.923 $\approx$ Virchow2~\citep{zimmermann2024virchow2} 0.922 $>$ Prov-GigaPath~\citep{xu2024whole} 0.909).


\textbf{GCcore is both the strongest method and the one least dependent on the slide encoder (Figure~\ref{fig:ablation_FMchoice_PANDA}).} Over the 12 combinations of encoder and retention rate (3 encoders $\times$ 4 low retention rates, 0.5--5\%), GCcore ranks in the top four of the 15 selection strategies in all 12 and is the single best method in 8 of them (TITAN at every rate, MADELEINE at 1\%, 2\% and 5\%, and PRISM at 2\%). No baseline is comparably stable: the next most consistent method, ELFS-Pseudo, falls to 7th, and every other baseline falls to 8th or worse in at least one combination. We quantify encoder sensitivity relative to each encoder's own ceiling: for a given method and retention rate we compute how far the method falls below the full-dataset score of each encoder, take the range of these three gaps, and average it over the four low retention rates. Smaller values indicate a method whose loss relative to the achievable ceiling is more stable across encoders. GCcore attains the smallest value of all methods (11.7 points), ahead of ELFS-Proto (13.3) and far ahead of D\textsuperscript{2}-Uniform (32.6); the full ordering is GCcore (11.7) $<$ ELFS-Proto (13.3) $<$ Random (13.9) $<$ Moderate (14.2) $<$ CCS (15.4) $<$ InfoMax-Easy (16.3) $<$ K-means (17.3) $<$ D\textsuperscript{2}-Proto (18.4) $<$ ELFS-Pseudo (18.8) $<$ InfoMax-Hard (24.5) $<$ K-center (25.6) $<$ ZCore (26.4) $<$ Proto-Hard (28.4) $<$ Proto-Easy (30.2) $<$ D\textsuperscript{2}-Uniform (32.6).

\textbf{GCcore remains robust across tile encoders (Figure~\ref{fig:ablation_FMchoice_BEETLE}).} Over the 12 combinations of tile encoder and retention rate (3 encoders $\times$ 4 low retention rates, 0.02--0.2\%), GCcore is the best of the 15 selection strategies in 10 and is never worse than 2nd, giving it both the best worst-case rank (2nd) and the best mean rank (1.17) of any method. It also has the lowest encoder sensitivity (1.5 points), ahead of CCS (1.8) and ELFS-Proto (2.0).

\begin{figure}[t]
  \centering
  \includegraphics[width=\linewidth]{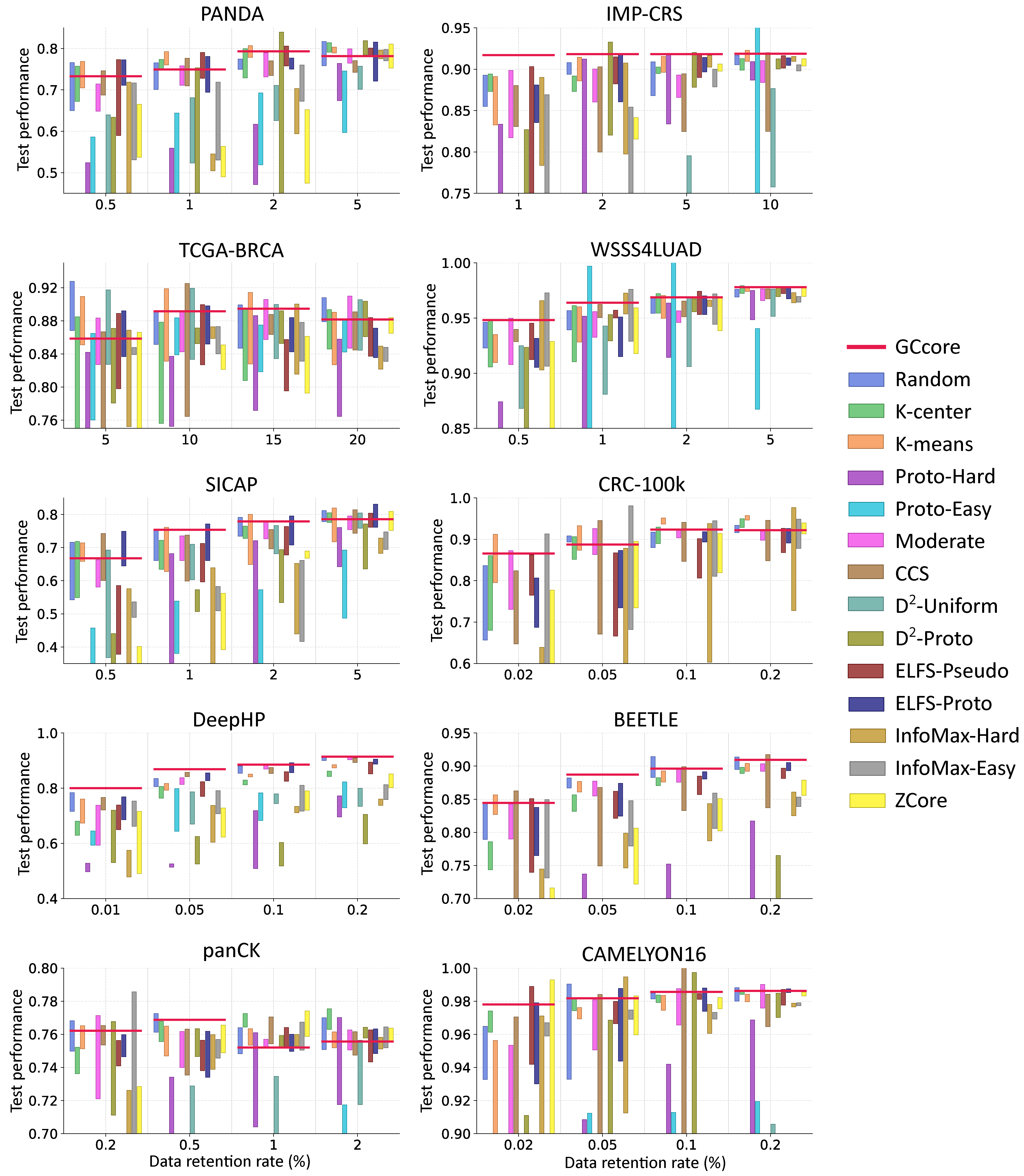}
  \caption{Baselines are sensitive to their hyperparameters; GCcore is deterministic and hyperparameter-free. Test performance at the four low data retention rates for 10 histopathology datasets spanning WSI classification (PANDA, IMP-CRS, TCGA-BRCA), tile classification (WSSS4LUAD, SICAP, CRC-100k, DeepHP) and segmentation (BEETLE, panCK, CAMELYON16). Each shaded box spans the mean ± one SD of a baseline across its five hyperparameter values; the box therefore reports the spread induced by the selection procedure itself. GCcore (red) is deterministic and yields a single test performance per retention rate, shown as a horizontal level over the corresponding group.}
  \label{fig:combined_box}
\end{figure}

Taken together, GCcore attains the top mean rank across six different feature extractors (Table~\ref{tab:rank_sd_fm}) and its advantage is a property of the selection method itself rather than of any particular encoder.

\begin{table}[t]
\centering
\caption{%
Mean rank of the 15 selection methods under each FM as the feature extractor; the subscript is the SD of that rank across five hyperparameter values. GCcore is deterministic and reports no SD. \textbf{Mean}: average across the six feature extractors.}
\label{tab:rank_sd_fm}
\small
\setlength{\tabcolsep}{0pt}
\begin{tabular*}{0.8\linewidth}{@{\extracolsep{\fill}} l ccc ccc c @{}}
\toprule
\multirow[c]{3}{*}{\makecell[l]{Selection\\ method}}
& \multicolumn{3}{c}{Slide encoder (PANDA)}
& \multicolumn{3}{c}{Tile encoder (BEETLE)}
& \multirow[c]{3}{*}{\textbf{Mean}} \\
\cmidrule(lr){2-4} \cmidrule(lr){5-7}
& PRISM & TITAN & \makecell{MADE-\\LEINE}
& \makecell{H-opti-\\mus-1} & \makecell{Prov-\\GigaPath} & \makecell{Vir-\\chow2} & \\
\midrule
GCcore       & $3.0$ & $\underline{\mathbf{1.0}}$ & $\underline{\mathbf{1.5}}$ & $\underline{\mathbf{1.3}}$ & $\underline{\mathbf{1.3}}$ & $\underline{\mathbf{1.0}}$ & $\underline{\mathbf{1.5}}$ \\
Random       & \ms{6.5}{1.8} & \ms{3.8}{0.4} & \ms{3.8}{0.7} & \ms{2.3}{1.6} & \ms{4.0}{1.6} & \ms{3.3}{1.9} & \ms{3.9}{1.3} \\
Moderate     & \ms{7.8}{2.8} & \ms{4.0}{1.8} & \ms{4.0}{1.5} & \ms{3.8}{1.2} & \ms{2.5}{1.1} & \ms{3.0}{0.7} & \ms{4.2}{1.5} \\
K-means      & \msb{2.0}{0.8} & \ms{6.8}{2.3} & \ms{4.8}{2.5} & \ms{3.5}{0.8} & \ms{4.5}{1.1} & \ms{4.0}{1.0} & \ms{4.3}{1.4} \\
ELFS-Proto   & \ms{5.8}{1.3} & \ms{3.8}{0.7} & \ms{2.0}{1.2} & \ms{4.3}{1.2} & \ms{6.0}{1.6} & \ms{4.8}{1.2} & \ms{4.4}{1.2} \\
ELFS-Pseudo  & \ms{4.5}{1.7} & \ms{5.3}{1.2} & \ms{6.8}{3.1} & \ms{6.8}{0.9} & \ms{5.5}{1.6} & \ms{5.3}{1.3} & \ms{5.7}{1.6} \\
K-center     & \ms{3.0}{1.5} & \ms{10.5}{0.8} & \ms{9.3}{0.3} & \ms{6.5}{0.4} & \ms{4.5}{1.5} & \ms{7.3}{1.1} & \ms{6.8}{0.9} \\
CCS          & \ms{7.3}{0.9} & \ms{3.8}{3.0} & \ms{5.5}{2.4} & \ms{8.5}{3.3} & \ms{9.5}{2.3} & \ms{8.0}{1.7} & \ms{7.1}{2.3} \\
InfoMax-Easy & \ms{8.8}{3.3} & \ms{8.5}{1.8} & \ms{9.0}{1.8} & \ms{8.5}{1.5} & \ms{9.0}{0.8} & \ms{9.0}{1.3} & \ms{8.8}{1.8} \\
ZCore        & \ms{11.5}{1.3} & \ms{9.8}{0.8} & \ms{12.3}{0.7} & \ms{10.3}{0.4} & \ms{8.8}{1.1} & \ms{10.0}{0.8} & \ms{10.4}{0.8} \\
InfoMax-Hard & \ms{11.0}{1.4} & \ms{11.5}{1.1} & \ms{11.5}{0.8} & \ms{10.5}{0.5} & \ms{10.5}{0.6} & \ms{10.5}{1.4} & \ms{10.9}{1.0} \\
D\textsuperscript{2}-Proto     & \ms{9.3}{2.7} & \ms{10.3}{2.5} & \ms{8.5}{2.2} & \ms{13.5}{0.3} & \ms{13.5}{0.1} & \ms{13.4}{0.3} & \ms{11.4}{1.4} \\
Proto-Hard   & \ms{14.8}{1.6} & \ms{13.5}{1.0} & \ms{13.0}{1.0} & \ms{12.0}{0.5} & \ms{12.0}{0.8} & \ms{12.5}{1.0} & \ms{13.0}{1.0} \\
D\textsuperscript{2}-Uniform   & \ms{11.0}{0.9} & \ms{13.5}{2.9} & \ms{13.8}{1.3} & \ms{14.3}{0.4} & \ms{14.0}{0.0} & \ms{14.6}{0.6} & \ms{13.5}{1.0} \\
Proto-Easy   & \ms{14.0}{2.1} & \ms{14.3}{1.6} & \ms{14.5}{0.7} & \ms{14.3}{0.1} & \ms{14.5}{0.1} & \ms{13.5}{0.8} & \ms{14.2}{0.9} \\
\bottomrule
\end{tabular*}
\end{table}

\begin{figure}[t]
  \centering
  \includegraphics[width=\linewidth]{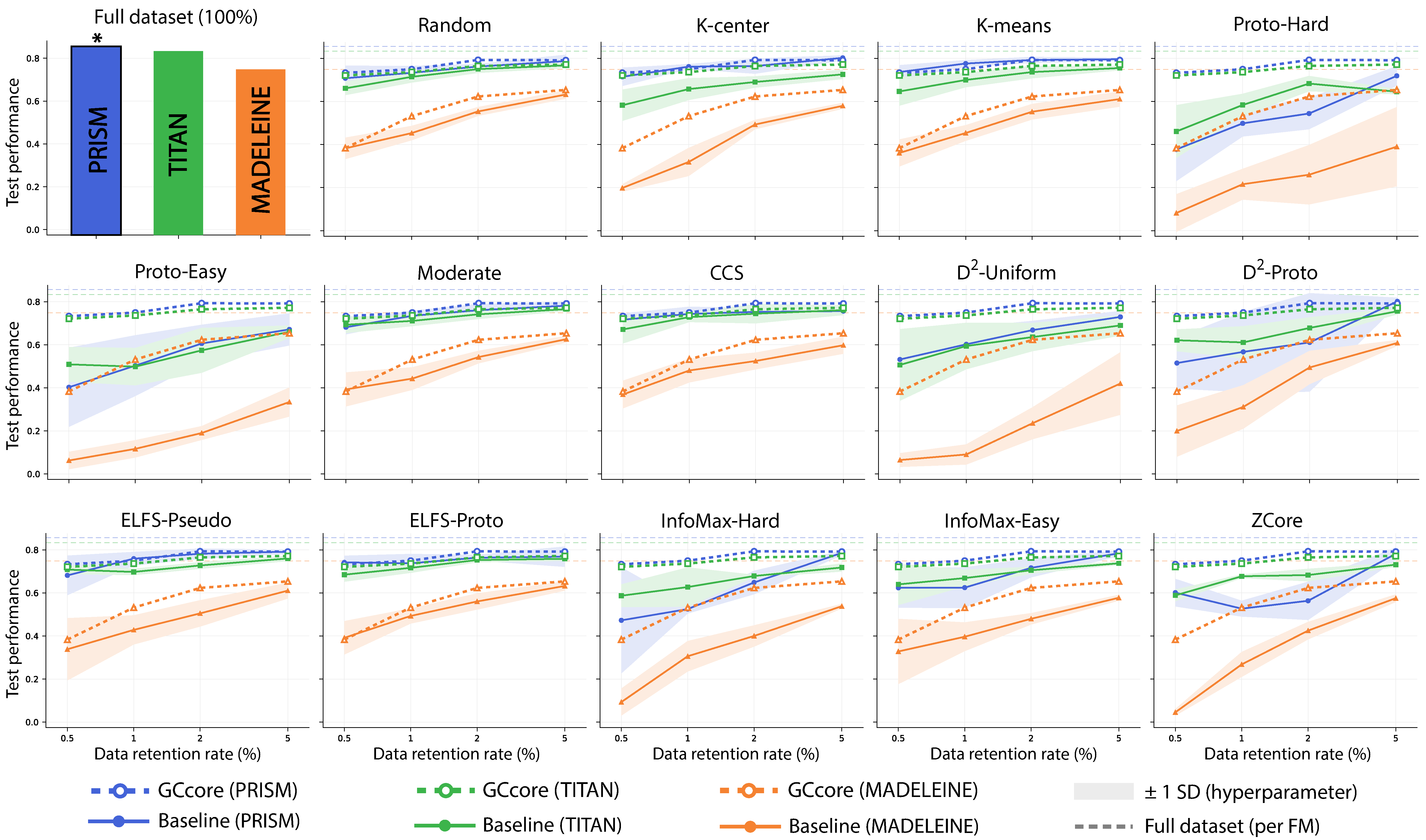}
  \caption{Test performance on different data retention rates (0.5–5\%, log scale) for three slide encoders (FMs): PRISM (blue), TITAN (green), and MADELEINE (orange) on PANDA dataset. The leftmost panel reports full-dataset (100\%) test performance per FM, with the strongest model starred. The remaining 14 panels each compare GCcore (dotted) against one baseline selection method (solid).}
  \label{fig:ablation_FMchoice_PANDA}
\end{figure}

\begin{figure}[t]
  \centering
  \includegraphics[width=\linewidth]{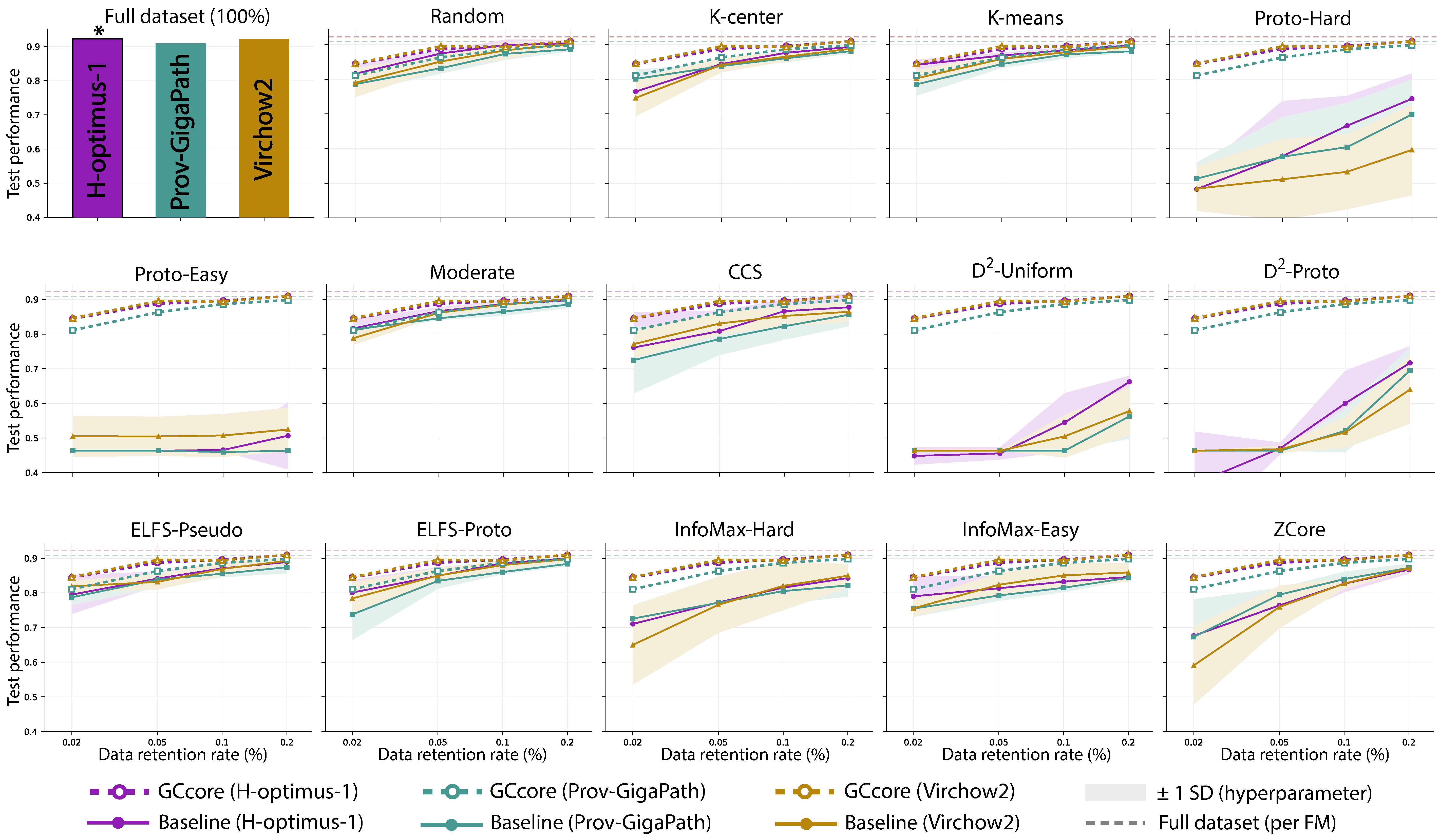}
  \caption{Test performance on different data retention rates (0.02–0.2\%, log scale) for three tile encoders (FMs): H-optimus-1 (purple), Prov-GigaPath (teal), and Virchow2 (ochre) on BEETLE dataset. Panels and line styles as in Figure~\ref{fig:ablation_FMchoice_PANDA}.}
  \label{fig:ablation_FMchoice_BEETLE}
\end{figure}

\begin{table}[t]
\centering
\caption{%
Mean rank of the 15 selection methods for every pair of selection and training FMs on PANDA; the subscript is the SD of that rank across five hyperparameter values. GCcore is deterministic and reports no SD. Column groups give the FM whose embedding was used for coreset selection; the three sub-columns within a group give the frozen FM used for model training and testing. \textbf{Mean}: average across the nine selection–training FM pairs.}
\label{tab:rank_select_encoder}
\small
\setlength{\tabcolsep}{0pt}
\begin{tabular*}{\linewidth}{@{\extracolsep{\fill}} l ccc ccc ccc c @{}}
\toprule
\multirow[c]{2}{*}{\makecell[l]{Selection\\ method}}
& \multicolumn{3}{c}{Selection: PRISM}
& \multicolumn{3}{c}{Selection: TITAN}
& \multicolumn{3}{c}{Selection: MADELEINE}
& \multirow[c]{2}{*}{\textbf{Mean}} \\
\cmidrule(lr){2-4} \cmidrule(lr){5-7} \cmidrule(lr){8-10}
& PRISM & TITAN & \makecell{MADE-\\LEINE} & PRISM & TITAN & \makecell{MADE-\\LEINE} & PRISM & TITAN & \makecell{MADE-\\LEINE} & \\
\midrule
GCcore       & $3.0$ & $\underline{\mathbf{3.3}}$ & $\underline{\mathbf{1.8}}$ & $\underline{\mathbf{1.3}}$ & $\underline{\mathbf{1.0}}$ & $2.0$ & $4.8$ & $5.8$ & $\underline{\mathbf{1.5}}$ & $\underline{\mathbf{2.7}}$ \\
ELFS-Pseudo  & \ms{4.5}{1.7} & \ms{3.8}{1.6} & \ms{3.3}{1.2} & \ms{4.0}{1.2} & \ms{5.3}{1.2} & \msb{1.5}{0.7} & \ms{3.5}{1.2} & \ms{4.3}{1.3} & \ms{6.8}{3.1} & \ms{4.1}{1.5} \\
K-means      & \msb{2.0}{0.8} & \ms{4.3}{1.9} & \ms{3.8}{1.6} & \ms{3.5}{1.8} & \ms{6.8}{2.3} & \ms{7.3}{2.6} & \ms{3.8}{0.5} & \ms{4.8}{1.0} & \ms{4.8}{2.5} & \ms{4.5}{1.7} \\
Moderate     & \ms{7.8}{2.8} & \ms{3.8}{0.4} & \ms{4.5}{1.4} & \ms{6.8}{2.5} & \ms{4.0}{1.8} & \ms{3.3}{2.2} & \ms{6.3}{2.5} & \msb{4.0}{0.5} & \ms{4.0}{1.5} & \ms{4.9}{1.7} \\
Random       & \ms{6.5}{1.8} & \ms{4.0}{0.8} & \ms{6.8}{1.1} & \ms{6.3}{1.8} & \ms{3.8}{0.4} & \ms{5.5}{0.8} & \ms{4.0}{1.8} & \msb{4.0}{0.9} & \ms{3.8}{0.7} & \ms{4.9}{1.1} \\
ELFS-Proto   & \ms{5.8}{1.3} & \ms{6.5}{2.6} & \ms{7.3}{1.8} & \ms{7.3}{0.7} & \ms{3.8}{0.7} & \ms{6.0}{1.4} & \msb{3.3}{2.3} & \msb{4.0}{2.5} & \ms{2.0}{1.2} & \ms{5.1}{1.6} \\
CCS          & \ms{7.3}{0.9} & \ms{9.0}{0.9} & \ms{10.3}{1.5} & \ms{8.3}{1.1} & \ms{3.8}{3.0} & \ms{4.8}{1.9} & \ms{5.0}{2.7} & \ms{5.0}{2.2} & \ms{5.5}{2.4} & \ms{6.5}{1.8} \\
InfoMax-Easy & \ms{8.8}{3.3} & \ms{8.8}{1.4} & \ms{3.8}{2.6} & \ms{5.5}{0.6} & \ms{8.5}{1.8} & \ms{9.0}{2.4} & \ms{8.3}{2.0} & \ms{7.5}{1.8} & \ms{9.0}{1.8} & \ms{7.7}{2.0} \\
K-center     & \ms{3.0}{1.5} & \ms{5.8}{2.9} & \ms{10.3}{1.3} & \ms{9.8}{1.1} & \ms{10.5}{0.8} & \ms{8.8}{1.5} & \ms{11.0}{2.5} & \ms{9.3}{0.6} & \ms{9.3}{0.3} & \ms{8.6}{1.4} \\
D\textsuperscript{2}-Proto     & \ms{9.3}{2.7} & \ms{12.0}{0.5} & \ms{9.8}{1.9} & \ms{10.9}{3.5} & \ms{10.3}{2.5} & \ms{9.3}{2.0} & \ms{8.8}{2.2} & \ms{8.0}{2.7} & \ms{8.5}{2.2} & \ms{9.6}{2.3} \\
InfoMax-Hard & \ms{11.0}{1.4} & \ms{11.5}{1.1} & \ms{6.8}{2.5} & \ms{7.8}{2.2} & \ms{11.5}{1.1} & \ms{11.0}{0.8} & \ms{12.3}{0.8} & \ms{10.3}{0.5} & \ms{11.5}{0.8} & \ms{10.4}{1.2} \\
ZCore        & \ms{11.5}{1.3} & \ms{8.3}{1.9} & \ms{10.5}{1.3} & \ms{8.9}{1.0} & \ms{9.8}{0.8} & \ms{9.8}{1.2} & \ms{13.0}{0.5} & \ms{12.8}{0.7} & \ms{12.3}{0.7} & \ms{10.7}{1.0} \\
Proto-Hard   & \ms{14.8}{1.6} & \ms{10.8}{1.4} & \ms{13.0}{1.8} & \ms{13.3}{0.6} & \ms{13.5}{1.0} & \ms{13.3}{0.2} & \ms{11.0}{3.8} & \ms{12.3}{4.3} & \ms{13.0}{1.0} & \ms{12.8}{1.7} \\
D\textsuperscript{2}-Uniform   & \ms{11.0}{0.9} & \ms{13.5}{1.5} & \ms{13.5}{0.6} & \ms{13.0}{2.7} & \ms{13.5}{2.9} & \ms{13.8}{1.6} & \ms{13.5}{0.5} & \ms{14.0}{2.1} & \ms{13.8}{1.3} & \ms{13.3}{1.6} \\
Proto-Easy   & \ms{14.0}{2.1} & \ms{15.0}{0.3} & \ms{15.0}{0.1} & \ms{13.8}{1.8} & \ms{14.3}{1.6} & \ms{15.0}{0.2} & \ms{11.8}{0.9} & \ms{14.3}{3.2} & \ms{14.5}{0.7} & \ms{14.2}{1.2} \\
\bottomrule
\end{tabular*}
\end{table}

\subsection{GCcore transfers across selection and training encoders}

As explained in Section~\ref{exp}, the embeddings extracted from a single FM were used for both coreset selection and downstream model training. In this section we investigate how far GCcore transfers across the FM used for coreset selection and the FM used for downstream training, using the PANDA dataset.

Table~\ref{tab:rank_select_encoder} shows that GCcore attains the best mean rank of all methods (2.69). The transfer is strongest when a stronger FM is used for selection. Over the six pairs in which PRISM or TITAN provides the selection embedding, GCcore attains a mean rank of 2.04 and is never worse than 3.25, irrespective of which of the three FMs is used for training; in the two pairs where it is not first, it is second or joint second. When MADELEINE provides the selection embedding, however, GCcore remains best in the matched setting (1.50) but falls to 4.75 (5th) and 5.75 (7th) when the coreset is transferred to PRISM and TITAN, giving a mean rank of 4.00 over those three pairs. MADELEINE also yields the lowest full-dataset ceiling of the three FMs (QWK 0.748, against 0.856 for PRISM and 0.833 for TITAN). 

These results indicate that, on PANDA, a coreset selected by GCcore with a stronger encoder (PRISM or TITAN) remains among the top-ranked choices for training with any of the three encoders, so it need not be recomputed for a different downstream feature extractor. Selecting with the weakest available encoder, on the other hand, is not advisable when the coreset is intended for transfer.

\subsection{GCcore is robust to class imbalance}
\label{sec:classimb}

\textbf{Handling class imbalance during training.}
As described in Section~\ref{exp}, the classification and segmentation heads were trained with a class-weighted cross-entropy loss on all datasets except WSSS4LUAD, in order to mitigate the class imbalance of the training set. Because WSSS4LUAD is a multi-label problem, a weighted categorical cross-entropy loss is not appropriate as the three tissue classes are not mutually exclusive and a single tile may contain several tissue types. We therefore weight the positive term of class $c$ by
\begin{equation}
  \lambda_{c} = \min\!\left(\frac{n^{-}_{c}}{n^{+}_{c}},\, \lambda_{\max}\right),
  \qquad \lambda_{\max} = 10 ,
\end{equation}
where $n^{+}_{c}$ and $n^{-}_{c}$ are the positive and negative counts of class $c$ within the selected coreset, since that is the distribution to which the head is exposed. The cap $\lambda_{\max}$ prevents a class that becomes very rare in a small coreset from dominating the gradient. 

\textbf{Simulating strongly imbalanced training sets.}
To further investigate the robustness of GCcore to strongly class-imbalanced training data, we simulated imbalanced training sets from the relatively balanced PANDA training set (grade 0, 2,313; grade 1, 2,133; grade 2, 1,074; grade 3, 994; grade 4, 999; grade 5, 979 WSIs; 8,492 in total). In each simulation, five of the six ISUP grades were drawn at random and down-sampled to 1\%, 20\%, 40\%, 60\% and 80\% of their WSIs, respectively, while the remaining grade was retained in full; within each down-sampled grade, the retained WSIs were selected uniformly at random. Repeating this procedure three times yielded three imbalanced training sets (IMBs). In IMB1, grades 5, 3, 1, 4 and 2 retained 1\%, 20\%, 40\%, 60\% and 80\% of their WSIs, respectively, and grade 0 was retained in full, giving 4,833 WSIs in total (grade 0, 2,313; grade 1, 853; grade 2, 859; grade 3, 199; grade 4, 599; grade 5, 10); the imbalance ratio between the largest and the smallest grade thereby increased from 2.4 to 231.3. In IMB2, grades 0, 5, 3, 1 and 2 retained 1\%, 20\%, 40\%, 60\% and 80\% of their WSIs, respectively, and grade 4 was retained in full, giving 3,755 WSIs in total (grade 0, 23; grade 1, 1,280; grade 2, 859; grade 3, 398; grade 4, 999; grade 5, 196; imbalance ratio 55.7). In IMB3, grades 4, 0, 2, 1 and 3 retained 1\%, 20\%, 40\%, 60\% and 80\% of their WSIs, respectively, and grade 5 was retained in full, giving 3,957 WSIs in total (grade 0, 463; grade 1, 1,280; grade 2, 430; grade 3, 795; grade 4, 10; grade 5, 979; imbalance ratio 128.0). 

Each IMB was then treated as the full dataset from which coresets were selected, at the four low retention rates $\{0.5, 1, 2, 5\}\%$ and using each of the 15 selection strategies. Performance is always reported on the complete test set (2,124 WSIs).

\textbf{GCcore ranks best overall across three imbalance regimes.}
Table~\ref{tab:imb_rank_sd} shows that GCcore attains the best mean rank over the three IMBs (3.3 of 15) and is the only strategy placed among the two best on all three sets. Notably, GCcore reaches a QWK of 0.657 from only 19 WSIs (0.5\% retention) on IMB2, recovering 90.7\% of the QWK obtained with the entire imbalanced training set (0.724), against 0.549 for the best competing strategy and 0.477 for Random. No competing strategy reaches this level at retention rates up to 2\% (75 WSIs), and Random requires 20\% retention (751 WSIs) to reach a comparable level; that is, a coreset of 19 slides selected by GCcore performs on par with a randomly selected coreset roughly 40 times larger.

\textbf{The advantage of GCcore does not necessarily arise from class re-balancing.}
A natural explanation for robustness under imbalance is that a selection strategy implicitly re-balances the label distribution. We tested this by quantifying the class balance of each coreset as the normalized Shannon entropy over the six ISUP grades (evenness; 1 = uniform) and relating it to test performance (Fig.~\ref{fig:ablation_classimb_fig2}).
Balance and performance are positively associated across strategies on IMB1 ($R^2 = 0.57$, $p < 0.001$) and on IMB3 ($R^2 = 0.42$, $p < 0.001$), and on both GCcore selects the most balanced coreset of all 15 strategies at 0.5 and 1\% retention while also ranking first in test performance.
On IMB2, however, the association weakens markedly ($R^2 = 0.10$, $p = 0.037$), with evenness explaining only 10\% of the variation in performance. GCcore is only mid-ranked in balance (evenness rank 6--11 of 15 at retention $\le 5\%$) while ranking first in test performance at 0.5 and 1\% retention. The most balanced coreset in fact performs below the median at both rates: Proto-Hard attains the highest evenness at 0.5\% retention (0.909, versus 0.730 for GCcore) yet ranks only 10 of 15 in test performance (0.389), and InfoMax-Hard attains the highest evenness at 1\% retention (0.920, versus 0.798 for GCcore) yet ranks 9 of 15 (0.441). A more even label distribution is therefore neither necessary nor sufficient for performance under class imbalance.

\begin{table}[t]
\centering
\caption{%
Mean rank of the 15 selection methods on each class-imbalanced training set; the subscript is the SD of that rank across five hyperparameter values. \textbf{Mean}: average across the three imbalanced training sets.}
\label{tab:imb_rank_sd}
\small
\begin{tabularx}{0.7\linewidth}{@{} l *{4}{>{\centering\arraybackslash}X} @{}}
\toprule
\multirow[c]{2}{*}{\makecell[l]{Selection\\ method}} & \multicolumn{3}{c}{Class-imbalanced training sets} & \multirow[c]{2}{*}{\textbf{Mean}} \\
\cmidrule(lr){2-4}
& IMB1 & IMB2 & IMB3 & \\
\midrule
GCcore       & $2.8$ & $4.8$ & $\underline{\mathbf{2.5}}$ & $\underline{\mathbf{3.3}}$ \\
K-means      & \ms{5.3}{2.9} & \ms{7.0}{4.4} & \ms{3.3}{2.1} & \ms{5.2}{3.1} \\
ELFS-Proto   & \msb{2.3}{1.5} & \ms{6.8}{2.0} & \ms{7.0}{2.0} & \ms{5.3}{1.8} \\
K-center     & \ms{4.8}{1.3} & \ms{5.0}{4.9} & \ms{6.3}{1.9} & \ms{5.3}{2.7} \\
ELFS-Pseudo  & \ms{6.3}{2.4} & \msb{4.0}{2.6} & \ms{7.8}{3.0} & \ms{6.0}{2.6} \\
InfoMax-Easy & \ms{8.8}{2.4} & \ms{6.5}{3.1} & \ms{3.0}{1.9} & \ms{6.1}{2.4} \\
Random       & \ms{4.5}{1.2} & \ms{9.3}{1.1} & \ms{5.8}{1.8} & \ms{6.5}{1.4} \\
Moderate     & \ms{5.3}{2.9} & \ms{6.3}{1.8} & \ms{8.5}{2.8} & \ms{6.7}{2.5} \\
CCS          & \ms{7.3}{2.7} & \ms{9.0}{3.2} & \ms{7.0}{1.0} & \ms{7.8}{2.3} \\
D\textsuperscript{2}-Proto     & \ms{8.3}{2.3} & \ms{5.8}{1.9} & \ms{11.3}{1.9} & \ms{8.4}{2.0} \\
InfoMax-Hard & \ms{11.8}{1.1} & \ms{7.0}{2.3} & \ms{9.0}{1.2} & \ms{9.3}{1.5} \\
ZCore        & \ms{13.5}{1.3} & \ms{11.0}{1.1} & \ms{10.5}{0.9} & \ms{11.7}{1.1} \\
D\textsuperscript{2}-Uniform   & \ms{12.3}{1.4} & \ms{12.8}{1.7} & \ms{11.8}{1.4} & \ms{12.3}{1.5} \\
Proto-Easy   & \ms{13.3}{0.3} & \ms{13.3}{2.7} & \ms{12.8}{1.1} & \ms{13.1}{1.4} \\
Proto-Hard   & \ms{14.0}{0.8} & \ms{11.8}{2.5} & \ms{13.8}{1.7} & \ms{13.2}{1.6} \\
\bottomrule
\end{tabularx}
\end{table}

\begin{figure}[t]
  \centering
  \includegraphics[width=\linewidth]{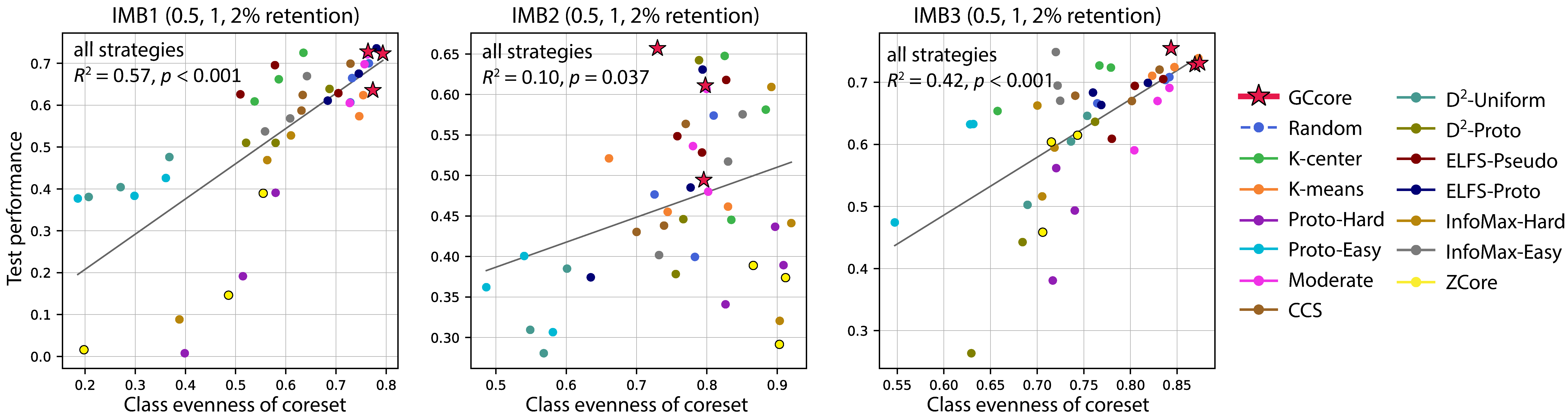}
  \caption{A class-balanced selection is neither necessary nor sufficient for better performance.}
  \label{fig:ablation_classimb_fig2}
\end{figure}

\subsection{GCcore is robust to the choice of distance metric}

GCcore is formulated for a general distance in Eq.~\ref{eq:di}. The derivation in Appendix~\ref{sec:guarantee} uses only two properties: that the pairwise dissimilarity is symmetric and nonnegative, and that $r_{\max}$ is the maximum pairwise value of that same dissimilarity over $\mathcal{S}$. Together, these give $\mathrm{sim}(x_i,x_j) \geq 0$ and $d_i(\mathcal{T}) \in [0, r_{\max}]$, which are all that Eq.~\ref{eq:equiv}, the identity $c_i(\mathcal{T}) = r_{\max} - d_i(\mathcal{T})$, and Lemma~\ref{lem:submod} require. Therefore, the $(1-1/e)$ guarantee of Eq.~\ref{eq:guarantee} holds for any distance metric. The selection procedure is otherwise unchanged: only the pairwise distances and $r_{\max}$ are recomputed.

We use the $\ell_1$ distance throughout the paper. To verify that the method is not tied to this choice, we repeat the selection with the Euclidean distance
\begin{equation}
    \lVert \mathbf{e}_i - \mathbf{e}_j \rVert_2 = \Big( \sum_{m=1}^{D} (e_{im} - e_{jm})^{2} \Big)^{1/2} ,
    \label{eq:l2}
\end{equation}
i.e., $d_i(\mathcal{T}) = \min_{x_j \in \mathcal{T}} \lVert \mathbf{e}_i - \mathbf{e}_j \rVert_2$
in Eq.~\ref{eq:di}, with $r_{\max}$ recomputed as the maximum pairwise $\ell_2$ distance over
$\mathcal{S}$.

Table~\ref{tab:rank_sd_l1l2} shows that both GCcore ($\ell_1$) and GCcore ($\ell_2$) rank ahead of every baseline on average, with mean ranks of 2.7 and 3.4 respectively, against 5.0 for the strongest baseline (Random). Per dataset, the two differ by a median of 1.0 rank positions, take the top two places on five datasets, and both fall within the top five on eight. GCcore's advantage over the baselines is therefore a property of its global coverage maximization mechanism, not of the distance used to instantiate it.

\begin{table}[t]
\centering
\caption{%
Mean rank of the 16 selection methods on each dataset, with GCcore's two distance variants treated as separate methods; the subscript is the SD of that rank across five hyperparameter values. GCcore is deterministic and reports no SD. \textbf{Mean}: average across the 10 datasets.}
\label{tab:rank_sd_l1l2}
\small
\setlength{\tabcolsep}{0pt}
\begin{tabular*}{\linewidth}{@{\extracolsep{\fill}} l ccc cccc ccc c @{}}
\toprule
\multirow[c]{3}{*}{\makecell[l]{Selection\\ method}}
& \multicolumn{3}{c}{WSI classification} & \multicolumn{4}{c}{Tile classification}
& \multicolumn{3}{c}{Segmentation} & \multirow[c]{3}{*}{\textbf{Mean}} \\
\cmidrule(lr){2-4} \cmidrule(lr){5-8} \cmidrule(lr){9-11}
& PANDA & \makecell{IMP-\\CRS} & \makecell{TCGA-\\BRCA} & \makecell{WSSS4\\LUAD}
& SICAP & \makecell{CRC-\\100k} & \makecell{Deep\\HP} & \makecell{BEE\\TLE}
& panCK & \makecell{CAME-\\LYON16} & \\
\midrule
GCcore ($\ell_1$) & $3.1$ & $\underline{\mathbf{1.5}}$ & $\underline{\mathbf{2.9}}$ & $\underline{\mathbf{1.5}}$ & $3.0$ & $4.0$ & $\underline{\mathbf{1.1}}$ & $\underline{\mathbf{1.8}}$ & $6.5$ & $\underline{\mathbf{1.5}}$ & $\underline{\mathbf{2.7}}$ \\
GCcore ($\ell_2$) & $4.6$ & $1.8$ & $6.4$ & $3.3$ & $4.0$ & $2.8$ & $1.9$ & $\underline{\mathbf{1.8}}$ & $5.5$ & $2.0$ & $3.4$ \\
Random        & \ms{7.5}{2.1} & \ms{5.8}{1.2} & \ms{3.0}{1.2} & \ms{6.3}{2.0} & \ms{3.8}{2.1} & \ms{5.5}{1.1} & \ms{4.5}{0.9} & \ms{3.0}{1.7} & \ms{4.3}{1.8} & \ms{6.5}{1.1} & \ms{5.0}{1.5} \\
K-means       & \msb{2.0}{1.0} & \ms{3.8}{1.5} & \ms{6.3}{2.9} & \ms{7.6}{1.9} & \ms{6.8}{3.5} & \msb{1.5}{1.1} & \ms{6.8}{0.7} & \ms{4.5}{0.9} & \ms{5.8}{2.3} & \ms{8.0}{1.8} & \ms{5.3}{1.7} \\
ELFS-Proto    & \ms{6.5}{1.5} & \ms{5.8}{1.5} & \ms{7.3}{3.3} & \ms{9.5}{1.5} & \msb{2.3}{3.4} & \ms{8.3}{1.1} & \ms{4.8}{0.5} & \ms{5.3}{1.2} & \ms{9.3}{1.8} & \ms{5.5}{0.7} & \ms{6.4}{1.6} \\
K-center      & \ms{3.5}{1.8} & \ms{7.3}{0.8} & \ms{12.0}{4.7} & \ms{7.3}{3.3} & \ms{5.3}{1.9} & \ms{4.8}{1.7} & \ms{9.3}{0.4} & \ms{7.5}{0.4} & \msb{3.8}{0.8} & \ms{3.8}{0.8} & \ms{6.4}{1.7} \\
Moderate      & \ms{8.8}{3.1} & \ms{10.0}{1.5} & \ms{5.0}{2.6} & \ms{10.1}{2.7} & \ms{6.5}{1.6} & \ms{5.3}{2.1} & \ms{5.0}{1.1} & \ms{4.8}{1.3} & \ms{9.0}{2.9} & \ms{8.3}{2.8} & \ms{7.3}{2.2} \\
ELFS-Pseudo   & \ms{4.8}{2.0} & \ms{7.0}{2.4} & \ms{11.3}{3.6} & \ms{4.8}{1.5} & \ms{9.3}{2.9} & \ms{9.0}{1.6} & \ms{8.0}{0.8} & \ms{7.8}{0.9} & \ms{9.8}{2.4} & \ms{5.0}{1.2} & \ms{7.7}{1.9} \\
CCS           & \ms{8.3}{1.1} & \ms{11.8}{2.7} & \ms{9.5}{2.8} & \ms{7.5}{1.3} & \ms{6.8}{1.8} & \ms{9.5}{2.4} & \ms{4.5}{0.2} & \ms{9.5}{3.4} & \ms{7.0}{2.3} & \ms{12.3}{3.9} & \ms{8.7}{2.2} \\
InfoMax-Easy  & \ms{9.8}{3.8} & \ms{11.3}{0.3} & \ms{11.0}{2.0} & \ms{8.3}{4.1} & \ms{11.8}{0.7} & \ms{7.5}{3.6} & \ms{9.5}{1.7} & \ms{9.5}{1.5} & \ms{7.3}{4.0} & \ms{7.8}{0.3} & \ms{9.4}{2.2} \\
ZCore         & \ms{12.5}{1.4} & \ms{9.8}{0.8} & \ms{12.5}{1.1} & \ms{10.5}{2.7} & \ms{11.3}{1.1} & \ms{8.0}{2.2} & \ms{12.0}{0.9} & \ms{11.3}{0.4} & \ms{6.0}{2.1} & \ms{7.0}{3.0} & \ms{10.1}{1.6} \\
InfoMax-Hard  & \ms{12.0}{1.5} & \ms{6.5}{1.5} & \ms{11.0}{1.8} & \ms{5.5}{1.4} & \ms{13.0}{1.7} & \ms{12.0}{2.0} & \ms{14.0}{0.4} & \ms{11.5}{0.5} & \ms{10.0}{1.5} & \ms{10.8}{2.3} & \ms{10.6}{1.5} \\
D\textsuperscript{2}-Proto      & \ms{10.0}{3.0} & \ms{10.0}{2.7} & \ms{6.8}{1.2} & \ms{9.5}{2.3} & \ms{13.0}{2.2} & \ms{13.8}{0.4} & \ms{14.5}{1.1} & \ms{14.5}{0.3} & \ms{7.0}{2.7} & \ms{12.8}{3.9} & \ms{11.2}{2.0} \\
D\textsuperscript{2}-Uniform    & \ms{12.0}{0.9} & \ms{15.3}{0.1} & \ms{4.5}{3.8} & \ms{13.8}{4.7} & \ms{8.5}{1.8} & \ms{15.8}{0.4} & \ms{12.8}{1.3} & \ms{15.3}{0.4} & \ms{15.0}{2.0} & \ms{15.9}{0.2} & \ms{12.9}{1.6} \\
Proto-Hard    & \ms{15.8}{1.6} & \ms{13.0}{1.4} & \ms{15.5}{2.7} & \ms{14.8}{2.1} & \ms{15.5}{3.4} & \ms{13.3}{0.4} & \ms{15.3}{0.5} & \ms{13.0}{0.5} & \ms{14.0}{2.0} & \ms{14.0}{1.5} & \ms{14.4}{1.6} \\
Proto-Easy    & \ms{15.0}{2.1} & \ms{15.8}{0.2} & \ms{11.3}{2.2} & \ms{16.0}{0.3} & \ms{15.5}{1.1} & \ms{15.3}{0.2} & \ms{12.3}{1.3} & \ms{15.3}{0.1} & \ms{16.0}{0.1} & \ms{15.1}{1.1} & \ms{14.7}{0.9} \\
\bottomrule
\end{tabular*}
\end{table}

\begin{table}[t]
\centering
\caption{%
Mean rank of the 15 selection methods on PANDA under the native slide-embedding dimensionality $D$ and the PCA-reduced dimensionality $D'$ (99\% explained variance; PRISM $1280\!\to\!189$, TITAN $768\!\to\!207$). The subscript is the SD of that rank across five hyperparameter values. GCcore is deterministic and reports no SD. \textbf{Mean}: average across the two slide encoders.}
\label{tab:rank_dim}
\small
\setlength{\tabcolsep}{3pt}
\begin{tabularx}{0.75\linewidth}{@{}l *{6}{Y}@{}}
\toprule
\multirow[c]{2}{*}{\makecell[l]{Selection\\ method}}
& \multicolumn{3}{c}{Full embedding ($\mathbf{e} \in \mathbb{R}^{D}$)}
& \multicolumn{3}{c}{Reduced embedding ($\mathbf{e} \in \mathbb{R}^{D'}$)} \\
\cmidrule(lr){2-4} \cmidrule(lr){5-7}
& PRISM & TITAN & \textbf{Mean} & PRISM & TITAN & \textbf{Mean} \\
\midrule
GCcore       & $3.0$ & $\underline{\mathbf{1.0}}$ & $\underline{\mathbf{2.0}}$ & $\underline{\mathbf{2.3}}$ & $\underline{\mathbf{2.8}}$ & $\underline{\mathbf{2.5}}$ \\
K-means      & \msb{2.0}{0.8} & \ms{6.8}{2.3} & \ms{4.4}{1.5} & \ms{3.5}{1.4} & \ms{4.8}{1.2} & \ms{4.1}{1.3} \\
ELFS-Proto   & \ms{5.8}{1.3} & \ms{3.8}{0.7} & \ms{4.8}{1.0} & \ms{5.8}{0.9} & \ms{3.3}{1.3} & \ms{4.5}{1.1} \\
InfoMax-Easy & \ms{8.8}{3.3} & \ms{8.5}{1.8} & \ms{8.6}{2.6} & \ms{2.5}{2.4} & \ms{6.5}{2.5} & \ms{4.5}{2.4} \\
ELFS-Pseudo  & \ms{4.5}{1.7} & \ms{5.3}{1.2} & \ms{4.9}{1.4} & \ms{3.0}{1.2} & \ms{7.0}{1.7} & \ms{5.0}{1.5} \\
Random       & \ms{6.5}{1.8} & \ms{3.8}{0.4} & \ms{5.1}{1.1} & \ms{7.0}{1.7} & \ms{3.3}{0.7} & \ms{5.1}{1.2} \\
CCS          & \ms{7.3}{0.9} & \ms{3.8}{3.0} & \ms{5.5}{2.0} & \ms{7.3}{0.7} & \ms{5.0}{1.4} & \ms{6.1}{1.0} \\
InfoMax-Hard & \ms{11.0}{1.4} & \ms{11.5}{1.1} & \ms{11.3}{1.3} & \ms{9.0}{1.3} & \ms{5.8}{0.9} & \ms{7.4}{1.1} \\
Moderate     & \ms{7.8}{2.8} & \ms{4.0}{1.8} & \ms{5.9}{2.3} & \ms{9.0}{2.9} & \ms{8.3}{1.8} & \ms{8.6}{2.4} \\
K-center     & \ms{3.0}{1.5} & \ms{10.5}{0.8} & \ms{6.8}{1.1} & \ms{7.8}{2.0} & \ms{10.0}{1.9} & \ms{8.9}{2.0} \\
D\textsuperscript{2}-Proto     & \ms{9.3}{2.7} & \ms{10.3}{2.5} & \ms{9.8}{2.6} & \ms{10.5}{1.0} & \ms{10.8}{2.2} & \ms{10.6}{1.6} \\
D\textsuperscript{2}-Uniform   & \ms{11.0}{0.9} & \ms{13.5}{2.9} & \ms{12.3}{1.9} & \ms{13.0}{1.8} & \ms{11.3}{1.9} & \ms{12.1}{1.9} \\
ZCore        & \ms{11.5}{1.3} & \ms{9.8}{0.8} & \ms{10.6}{1.0} & \ms{11.8}{0.5} & \ms{14.0}{0.2} & \ms{12.9}{0.4} \\
Proto-Hard   & \ms{14.8}{1.6} & \ms{13.5}{1.0} & \ms{14.1}{1.3} & \ms{12.8}{0.6} & \ms{13.3}{2.2} & \ms{13.0}{1.4} \\
Proto-Easy   & \ms{14.0}{2.1} & \ms{14.3}{1.6} & \ms{14.1}{1.8} & \ms{15.0}{0.9} & \ms{14.3}{0.6} & \ms{14.6}{0.7} \\
\bottomrule
\end{tabularx}
\end{table}

\subsection{GCcore is robust to the dimensionality of the FM embedding}
\label{ablation_dim}
As described in Section~\ref{exp_setup}, the WSI classification head maps a slide embedding $\mathbf{e} \in \mathbb{R}^{D}$ to the class logits, where $D$ is the native output dimensionality of the slide encoder ($D=1280$ for PRISM, $D=768$ for TITAN). To test the robustness of GCcore to the geometry of the embedding space, we applied PCA to the slide embeddings of PANDA and retained the leading components accounting for 99\% of the variance, giving $\mathbf{e} \in \mathbb{R}^{D'}$ with $D'=189$ for PRISM ($-85\%$) and $D'=207$ for TITAN ($-73\%$). 
We then repeated the comparison of all 15 selection methods in $\mathbb{R}^{D'}$ at low retention rates. Note that random selection does not use the embeddings, so its coresets are identical in both regimes.

Table~\ref{tab:rank_dim} shows that GCcore achieves the best mean rank in both regimes, 2.0 on the native embeddings and 2.5 on the reduced embeddings, and is the best-ranked method on the reduced embeddings of both FMs. More broadly, the ranking of the 15 selection strategies is largely preserved under dimensionality reduction: Spearman's rank correlation between the full and reduced embeddings is $\rho = 0.80$ for PRISM, $\rho = 0.82$ for TITAN and $\rho = 0.89$ for the mean over both encoders (all $p < 0.001$). In practice this means GCcore can be applied directly to compressed slide representations without modification, which is relevant given the large native dimensionalities across pathology FMs.

\end{document}